\newtheorem{theorem}{Theorem}[section]
\newtheorem{lemma}[theorem]{Lemma}
\newtheorem{remark}[theorem]{Remark}
\newtheorem{definition}[theorem]{Definition}
\newenvironment{proof}{\noindent{\em Proof:}}{\quad \hfill$\Box$\vspace{2ex}}
\numberwithin{equation}{section}
\def \bK {{\boldsymbol K}}
\def \bx {{\boldsymbol x}}
\def \ba {{\boldsymbol a}}
\def \bp {{\boldsymbol p}}
\def \by {{\boldsymbol y}}
\def \bc {{\boldsymbol c}}
\def \bd {{\boldsymbol d}}
\def \bz {{\boldsymbol z}}
\def \bw {{\boldsymbol w}}
\def \br {{\boldsymbol r}}
\def \bu {{\boldsymbol u}}
\def \bv {{\boldsymbol v}}
\def \cD {{\cal D}}
\def \bN {\mathbb N}
\def \bR {\mathbb R}
\def \cF {{\cal F}}
\def \cR {{\cal R}}
\def \supp {\,{\rm supp}\,}
\def \diag {\,{\rm diag}\,}
\begin{document}
%
\title{Kernel Support Vector Machine Classifiers with the \texorpdfstring{$\ell_0$}{}-Norm Hinge Loss}
%
%
%
%

\author{Rongrong~Lin,
        Yingjia~Yao, and~ Yulan~Liu
\IEEEcompsocitemizethanks{\IEEEcompsocthanksitem R. Lin, Y. Yao and Y. Liu (corresponding author) are with School of Mathematics and Statistics, Guangdong University of Technology, Guangzhou 510520, P.R. China (Email: linrr@gdut.edu.cn, 2112114020@mail2.gdut.edu.cn, ylliu@gdut.edu.cn.}}

\IEEEtitleabstractindextext{%
\begin{abstract}
   Support Vector Machine (SVM) has been one of the most successful machine learning techniques for binary classification problems. The key idea is to maximize the margin from the data to the hyperplane subject to correct classification on training samples. The commonly used hinge loss and its variations are sensitive to label noise, and unstable for resampling due to its unboundedness.
   This paper is concentrated on the kernel SVM with the $\ell_0$-norm hinge loss (referred as $\ell_0$-KSVM), which is a composite  function of hinge loss and $\ell_0$-norm and then could overcome the difficulties mentioned above. In consideration of the nonconvexity and nonsmoothness of $\ell_0$-norm hinge loss, we first characterize the limiting subdifferential of the $\ell_0$-norm hinge loss and then derive  the equivalent relationship among the proximal stationary point, the Karush-Kuhn-Tucker point, and the local optimal solution of $\ell_0$-KSVM.  Secondly, we develop an ADMM algorithm for $\ell_0$-KSVM, and obtain that any limit point of the sequence generated by the proposed algorithm is a locally optimal solution. Lastly, some experiments on the synthetic and real datasets are illuminated to show that $\ell_0$-KSVM can achieve comparable accuracy compared with the standard KSVM while the former generally enjoys fewer support vectors.
\end{abstract}

\begin{IEEEkeywords}
Kernel support vector machines, $\ell_0$-norm hinge loss, $\ell_0$-proximal operator, proximal stationary points,  Karush-Kuhn-Tucker points, ADMM
\end{IEEEkeywords}}

\maketitle

\IEEEdisplaynontitleabstractindextext

%
\IEEEpeerreviewmaketitle

\IEEEraisesectionheading{\section{Introduction}\label{sec:introduction}}

     \IEEEPARstart{S}{upport} Vector Machine (SVM) originally proposed by Vapnik in 1982 is one of the most successful methods for classification problems in machine learning \cite{Scholkopf2001,Steinwart08,Zaki2020}.
      To be precise, a labelled training dataset $\cD:=\{(\bx_i,y_i):i\in\bN_m\}\subseteq \bR^d\times \{-1,1\}$ is given,
      where $\bN_m:=\{1,2\ldots,m\}$.
      To find a separating hyperplane $h(\bx)=\bw^{\top}\bx+b$ with $\bw\in\bR^d$ and $b\in\bR$ to correctly classify the labelled examples, the standard soft-margin linear SVM with respect to the training data $\cD$ is established as follows:
     $$
     \min_{\bw\in\bR^d,b\in \mathbb{R}}\frac{1}{2}\|\bw\|_2^2+C\sum_{i=1}^m\Big[\big(1-y_i(\bw^{\top}\bx_i+b)\big)_+\Big]^k
     $$
     where $C>0$, $t_+\!:=\!\max\{t,0\}$ for any $t\!\in \!\mathbb{R}$ and $k\!=\!1{\,\rm or\,}2$.
     In the above SVM model, the second term $[(1-y_i(\bw^{\top}\bx_i+b))_+]^k$ is known as the hinge loss ($k=1$)
     or the squared hinge loss ($k=2$), and the first term $\frac{1}{2}\|\bw\|_2^2$ is to maximize the margin (see, e.g., \cite[Chapter 12]{Scholkopf2001}).

     Thanks to the kernel tricks, the linear SVM can be easily extended to build up nonlinear classifiers \cite{Steinwart08,Zaki2020}.
     The key idea is to map the original $d$-dimensional points in the input space to points in a high-dimensional feature space $\cF$ via some nonlinear transformation $\phi:\bR^d\to\cF$. In general, the feature space $\cF$ is assumed to be a Hilbert space with the inner product $\langle \cdot,\cdot\rangle_{\cF}$.
     The hyperplane in feature space $\cF$ with the weight $\bw\in\cF$ and the bias $b\in\bR$ takes the form
     $h(\boldsymbol{x})=\langle \boldsymbol{w},\phi(\boldsymbol{x})\rangle_{\cF}+b$.
     As a result, the nonlinear SVM is built as follows:
     \begin{equation}\label{SHSVM}
     \min_{\bw\in\cF,b\in \mathbb{R}}\frac{1}{2}\|\bw\|_{\cF}^2+C\sum_{i=1}^m\Big[\big(1-y_i(\langle \boldsymbol{w},\phi(\boldsymbol{x}_i)\rangle_{\cF}+b)\big)_+\Big]^{k}.
     \end{equation}
     Generally, the dimensionality of $\cF$ is very high, the curse of dimensionality happens if
     the term $\langle \boldsymbol{w},\phi(\boldsymbol{x}_i)\rangle_{\cF}$ in \eqref{SHSVM} is directly computed.
     To tackle this difficulty, the kernel method is proposed.
     A function $K:\bR^d\times \bR^d\to\bR$ with the feature map $\phi$ defined by
     \begin{equation}\label{kernel}
     K(\bz,\bz'):=\langle\phi(\bz),\phi(\bz')\rangle_{\cF},\mbox{ for any } \bz,\bz'\in\bR^d,
     \end{equation}
     is a positive definite kernel \cite{Scholkopf2001}.
     In this sense, the positive definite kernel is an extension of the standard inner product in a Euclidean space.

    The hinge loss used in the standard nonlinear SVM \eqref{SHSVM} is related to the shortest distance between positive labelled data and negative labelled data. Hence, the corresponding classifier is sensitive to noise and outliers \cite{Frenay2014}, and unstable for resampling. To solve this, a large body of convex or nonconvex loss functions is discussed in the literature (see, e.g., \cite{Huang2014,Feng2016,Wang2022,WangXiu2022} and the references therein).  
    The SVM with the pinball loss and the truncated pinball loss were considered in \cite{Huang2014} and \cite{Shen2017}, respectively.
    A class of nonconvex and smooth  margin-based classification losses were discussed in \cite{Feng2016}. Two typical examples are $\ell_{\sigma}(t)=\sigma^2(1-e^{-t_+^2/\sigma^2})$ and $\ell_{\sigma}(t)=\sigma^2\log(1+t_+^2/\sigma^2)$.
    The SVM with ramp loss \cite{WangShao2022}, truncated Huber loss \cite{WangShao2023}, truncated smoothly clipped absolute deviation (SCAD) loss function\cite{Wang2023}, and truncated least squares loss \cite{WangLi2023} are systematically studied by Wang and his collaborators.

     The binary loss is certainly an ideal loss function for the binary classification problem \cite{Chen1996,Cortes1995,Domingos1997,Brooks2011,Feng2016,Nguyen2013}. Specifically, we count the misclassification error if a sample is classified wrongly we incur a loss $1$, otherwise there is no penalty. Based on the binary loss and the hinge loss, the $\ell_0$-norm hinge loss is proposed in \cite{Tang2018,Brooks2011, Wang2022} as follows:
    \begin{align*}
    \ell_{0/1}(t):=\|t_+\|_0:=\left\{\begin{array}{cl}
    1,& {\rm if\;} t>0\\
    0,& {\rm if \;}t\le 0
    \end{array}\right..
    \end{align*}
    Here $\|\bx\|_0$ denotes the total number of non-zero elements in the vector $\bx\in\bR^d$.

    In this paper, we study the kernel/nonlinear SVM classifier with the $\ell_0$-norm hinge loss $\ell_{0/1}$ with respect to training data $\cD$ taking the form:
     \begin{equation}\label{SVM01}
     \min_{\boldsymbol{w}\in \cF,b\in \bR}\cR(\boldsymbol{w},b)
     :=\frac{1}{2}\|\boldsymbol{w}\|_{\cF}^2+C\sum_{i=1}^m\ell_{0/1}\big(1-y_i h(\bx_i)\big)
     \end{equation}
    where $C>0$ is a given penalty parameter, and $\phi:\bR^d\to\cF$ is given by \eqref{kernel}. Such a model \eqref{SVM01} is
    referred as \textsf{$\ell_0$-KSVM} throughout this paper.
    In particular, \eqref{SVM01} with $\phi(\bx)=\bx$, i.e.,
     the linear kernel $K(\bx,\bx')=\bx^{\top}\bx'$,  is called \textsf{$\ell_0$-SVM}.

    Unfortunately, $\ell_{0/1}$ loss is nonconvex and nonsmooth.
    To the best of our knowledge, there is very little work on the theoretical and algorithmic analysis for \eqref{SVM01}. Brooks developed in \cite[Section 3.2]{Brooks2011} the universal consistency of $\ell_0$-KSVM.
    Wang et al. in \cite{Wang2022} considered $\ell_0$-SVM.
    A proximal stationary point is proposed and the relations among the proximal stationary point, the global optimal solution and the local optimal solution are built under some conditions (see \cite[Theorem 3.2]{Wang2022}), based on the characterization of the proximal operator of the $\ell_0$-norm hinge loss function. Theorem 3.2 in \cite{Wang2022} proved that a proximal stationary point
    for $\ell_0$-SVM is a locally optimal solution. 
    However, in the converse direction, they only  obtain that any global optimal solution   is also a proximal stationary point under a very strong condition.
    Tang et al. \cite{Tang2018} developed the penalty method for both $\ell_0$-SVM and $\ell_0$-KSVM. The convergence analysis \cite[Theorem 2]{Tang2018} was only considered for $\ell_0$-SVM under the existence of the global minimizer for the penalty method, which is not easily checked.
     In our paper, we first characterize the limiting subdifferentials of the $\ell_0$-norm hinge loss and the 
     composition function with the $\ell_0$-norm hinge loss function and a linear mapping. Based on the limiting subdifferential of the 
     composition function, the definition of Karush-Kuhn-Tucker (KKT) point is given  for the $\ell_0$-KSVM. And then we build the equivalence among the proximal stationary point, Karush-Kuhn-Tucker point, and local optimal solution by the explicit expression of the limiting subdifferential of the $\ell_0$-norm hinge loss function, and obtain that any limit point of the sequence generated by the ADMM algorithm is a locally optimal point.

    The rest of this paper is organized as follows. In Section 2,
    we develop the representer theorem for the solution of $\ell_0$-KSVM.
    For the theoretical and algorithmic analysis, we reformulate $\ell_0$-KSVM in the matrix form.
    As a byproduct, a sufficient condition for the globally optimal solution is given with the help of the Weierstrass Theorem. In Section 3, we are devoted to the optimality of \textsf{$\ell_0$-KSVM}.
    Toward this purpose, firstly, the limiting subdifferentials of $L_{0/1}(g(z))$ defined by \eqref{L01norm} with $g:\mathbb{R}^{l}\to \mathbb{R}^{m}$ being a differentiable map is characterized in Lemma \ref{subdiff-Cgz}. In particular, the explicit expression of limiting subdifferentials for $\ell_0$-norm hinge loss is derived in Theorem \ref{SubDifL01Th2}.
    Secondly, the equivalence among the proximal stationary point, Karush-Kuhn-Tucker point, and local optimal solution is proved in Theorems \ref{KKTTheorm} and \ref{OptiTh1}.
    In Section 4, the ADMM algorithm for $\ell_0$-KSVM is developed, and any limit point of the sequence generated by the proposed algorithm is a proximal stationary point given in Theorem \ref{ConvThm}. In Section 5, we conduct experiments on synthetic and real datasets to show that $\ell_0$-KSVM can achieve comparable accuracy while the former generally has fewer support vectors.  The concluding remarks are made in the last section.

\section{The representer theorem}
In this paper, we shall study the $\ell_0$-KSVM model \ref{SVM01}.
Notice that the term $\langle \boldsymbol{w},\phi(\boldsymbol{x}_i)\rangle_{\cF}$
    in \eqref{SVM01} can not be directly handled once $\cF$ is a high-dimensional or even infinite-dimensional space.  To solve so, we should establish the representer theorem for the optimal solution of \eqref{SVM01}.

    \begin{theorem}[Representer Theorem]\label{RepTh}
     For any solution $(\boldsymbol{w}^*,b)$ of \textsf{$\ell_0$-KSVM} \eqref{SVM01}, there exist constants $c_i$, $i=1,2,\dots,m$ such that the weight vector
    \begin{equation}\label{WReTh}
    \boldsymbol{w}^*=\sum_{i=1}^m c_i\phi(\boldsymbol{x}_i).
    \end{equation}
    \end{theorem}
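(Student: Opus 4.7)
The plan is to exploit the finite-dimensionality of the subspace $V:=\spn\{\phi(\bx_i):i\in\bN_m\}\subseteq\cF$ together with the Hilbert-space structure of $\cF$, via the standard Kimeldorf--Wahba orthogonal decomposition. The argument is purely geometric: the nonconvex, nonsmooth loss $\ell_{0/1}$ plays no role because it depends on $\bw^{*}$ only through the $m$ inner products $\langle\bw^{*},\phi(\bx_i)\rangle_{\cF}$.

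First, I would observe that $V$ is finite-dimensional and hence a closed subspace of $\cF$, so every element of $\cF$ admits a unique orthogonal decomposition relative to $V$. Accordingly, write $\bw^{*}=\bw_{\parallel}+\bw_{\perp}$ with $\bw_{\parallel}\in V$ and $\bw_{\perp}\in V^{\perp}$. By definition of $V$, there exist scalars $c_1,\ldots,c_m\in\bR$ such that $\bw_{\parallel}=\sum_{i=1}^{m}c_{i}\phi(\bx_{i})$, and $\langle\bw_{\perp},\phi(\bx_{i})\rangle_{\cF}=0$ for every $i\in\bN_m$.

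Second, I would show that replacing $\bw^{*}$ by $\bw_{\parallel}$ weakly decreases the objective. For each $i$,
\[
\langle\bw^{*},\phi(\bx_{i})\rangle_{\cF}=\langle\bw_{\parallel},\phi(\bx_{i})\rangle_{\cF}+\langle\bw_{\perp},\phi(\bx_{i})\rangle_{\cF}=\langle\bw_{\parallel},\phi(\bx_{i})\rangle_{\cF},
\]
so the $m$ loss terms $\ell_{0/1}\bigl(1-y_i h(\bx_i)\bigr)$ agree at $(\bw^{*},b)$ and $(\bw_{\parallel},b)$. The Pythagorean identity gives
\[
\|\bw^{*}\|_{\cF}^{2}=\|\bw_{\parallel}\|_{\cF}^{2}+\|\bw_{\perp}\|_{\cF}^{2}\geq\|\bw_{\parallel}\|_{\cF}^{2},
\]
with strict inequality whenever $\bw_{\perp}\neq 0$. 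Hence $\cR(\bw_{\parallel},b)\leq\cR(\bw^{*},b)$, and strictly if $\bw_{\perp}\neq 0$.

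Finally, I would invoke the optimality of $(\bw^{*},b)$ to force $\bw_{\perp}=0$: if $(\bw^{*},b)$ is a global minimizer this is immediate, and if it is only local I would consider the path $\bw(t):=\bw_{\parallel}+t\bw_{\perp}$ for $t\in[0,1]$, along which the loss is constant while $\|\bw(t)\|_{\cF}^{2}$ is strictly increasing, so taking $t<1$ sufficiently close to $1$ produces a point in any prescribed neighborhood of $\bw^{*}$ with strictly smaller objective, contradicting local optimality unless $\bw_{\perp}=0$. In either case $\bw^{*}=\bw_{\parallel}=\sum_{i=1}^{m}c_i\phi(\bx_i)$, which is the desired representation. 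I do not expect any genuine obstacle: the only delicate point is the interpretation of ``solution'' as global versus local minimizer, which is resolved by the one-parameter path argument just described.
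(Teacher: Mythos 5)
Your proposal is correct and follows essentially the same route as the paper: orthogonal decomposition of $\boldsymbol{w}^{*}$ relative to $V=\spn\{\phi(\boldsymbol{x}_i)\}$, observing that the loss terms are unchanged while the Pythagorean identity strictly penalizes any nonzero component in $V^{\perp}$, and invoking optimality to conclude $\boldsymbol{w}^{\perp}=0$. Your one-parameter path argument handling the case of a merely local minimizer is a small refinement beyond the paper's proof (which implicitly treats global solutions), but it does not change the essence of the argument.
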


    \begin{proof} The orthogonal decomposition theorem tells us that every closed subspace of a Hilbert space has a complement.
     Denote by
    \[
    V:={\rm span}\big\{\phi(\boldsymbol{x}_1),\phi(\boldsymbol{x}_2),\dots,\phi(\boldsymbol{x}_m)\big\}
    \]
    the space of the linear span of feature points $\phi(\bx_1),\phi(\bx_2),\dots,\phi(\bx_m)$, and denote $V^{\perp}$ as the orthogonal complement of $V$ in feature space $\cF$.
    For any $\bw\in\cF$, there exist $\boldsymbol{w}^{\parallel}\in V$ and $\boldsymbol{w}^{\perp}\in V^{\perp}$ such that
    \[
    \boldsymbol{w}^*=\boldsymbol{w}^{\parallel}+\boldsymbol{w}^{\perp}.
    \]
    It follows that
    \[
    \langle\boldsymbol{w}^{\perp},\phi(\boldsymbol{x}_i)\rangle_{\cF}=0,\mbox{ for any } i\in\bN_m
    \]
    and then
    $$
    \begin{array}{ll}
    &\displaystyle{\cR(\boldsymbol{w}^*,b)}
    =\displaystyle{\frac{1}{2}\|\boldsymbol{w}^*\|^2_{\cF}
    +C\sum_{i=1}^m\ell_{0/1}(1-y_i h(\boldsymbol{w}^*)) }\\
    &\displaystyle{=\frac{1}{2}\|\boldsymbol{w}^{\parallel}+\boldsymbol{w}^{\perp}\|^2_{\cF}+C\sum_{i=1}^m\ell_{0/1}(1-y_i(\langle\boldsymbol{w}^{\parallel}
    +\boldsymbol{w}^{\perp},\phi(\boldsymbol{x}_i)\rangle_{\cF}+b)) }\\
    &\displaystyle{= \frac{1}{2}\|\boldsymbol{w}^{\parallel}\|^2_{\cF}+ \frac{1}{2}\|\boldsymbol{w}^{\perp}\|^2_{\cF}+C\sum_{i=1}^m\ell_{0/1}(1-y_i(\langle\boldsymbol{w}^{\parallel},\phi(\boldsymbol{x}_i)\rangle_{\cF}+b)) }\\
    &\displaystyle{=\cR(\boldsymbol{w}^{\parallel},b)+ \frac{1}{2}\|\boldsymbol{w}^{\perp}\|^2_{\cF}}.
    \end{array}
    $$
    Note that $\cR(\boldsymbol{w}^*,b)\leq \cR(\boldsymbol{w}^{\parallel},b)$
    since $(\boldsymbol{w}^*,b)$ is a solution of \eqref{SVM01}.
    Consequently,  $\frac{1}{2}\|\boldsymbol{w}^{\perp}\|^2_{\cF}=0$ and then $\bw^{\perp}=0$.
    Hence, $\bw^*=\bw^{\parallel}\in V$, that is,
    there exist constants $c_i$, $i=1,2,\dots,m$ such that \eqref{WReTh} holds.
    \end{proof}

     The above representer theorem asserts that the minimizer of \eqref{SVM01} is a linear combination of the given samples.
     From the proof for Theorem \ref{RepTh}, it is clear that the representer theorem even holds for a general loss function.

    For the convenience of our subsequent analysis, we should rewrite \eqref{SVM01} in the matrix form.
    For any positive definite kernel $K$ defined by \eqref{kernel},
    the kernel matrix $\bK$ with respect to $\cD$ is defined by
    $$
    \bK:=[K(\bx_i,\bx_j):i,j\in\bN_m].
    $$
    Denote the label vector $\by\!:=\!(y_1,y_2,\ldots,y_m)^{\top}\!\in\! \bR^m$ with $y_i\in\{-1,1\}$ for any $i\in\bN_m$, ${\bf 1}\in\bR^m$ a vector with all components being $1$,
    $\bu_{+} :=\big((u_1)_{+},(u_2)_+,\ldots,(u_m)_{+}\big)^{\top}$ for any $\bu\in \bR^m$,
    and
    \begin{equation}\label{L01norm}
    L_{0/1}(\bu):=\sum\limits_{i=1}^{m}\ell_{0/1}(u_i)=\|\bu_+\|_0, \mbox{  for any }\bu\in \bR^m.
    \end{equation}
    By \eqref{WReTh} and \eqref{kernel}, we have
    $$
    \|\bw^*\|_{\cF}^2
    =\sum_{i=1}^m\sum_{j=1}^mc_ic_j\langle \phi(\boldsymbol{x}_i),\phi(\boldsymbol{x}_j)\rangle_{\cF}
    =\bc^{\top}\bK \bc.
    $$
   Hence, $\ell_0$-KSVM (\ref{SVM01}) reduces to
    \begin{align}\label{SVM01eq1}
    \min_{\bc\in\bR^m, b\in \bR}\!\frac{1}{2}\bc^{\top}\bK\bc+C\|({\bf 1}-\diag(\by)\bK\bc-b\by)_+\|_0
   \end{align}
   or the following problem with an extra variable $\bu$,
    \begin{align}\label{SVM01eq2}
    &\min_{\bc\in\bR^m,b\in\bR,\bu\in \bR^m}\frac{1}{2}\bc^{\top}\bK\bc+CL_{0/1}(\bu)\\
    & \qquad\quad {\rm s.t. \quad} \bu+{\rm diag}(\by)\bK \bc+b\by ={\bf 1}, \nonumber
    \end{align}
    where $L_{0/1}$ is defined by \eqref{L01norm}.

    Next, an existence condition for solution of \eqref{SVM01eq1} is given when  $K$ is strictly positive definite on $\bR^{d}$, that is,
    $K$ is a positive definite kernel and for all $n\in\bN$ and $\bz_1,\bz_2,\dots,\bz_n\in\bR^d$ the quadratic form $\sum_{j=1}^n\sum_{i=1}^na_iK(\bz_i,\bz_j)a_j=0$ if and only if
    $\ba:=(a_1,a_2,\ldots,a_n)^{\top}={\bf 0}$.

    \begin{theorem}
      Suppose that $K$ is a strictly positive definite kernel on $\bR^d$, $b\in [-M, M]$ with
      $0<M<\infty$. Then the  optimal solution  to \eqref{SVM01eq1} exists and the solution set is nonempty and compact.
    \end{theorem}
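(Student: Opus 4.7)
The plan is to apply the Weierstrass theorem on the feasible set $\bR^m\times[-M,M]$, so the work reduces to checking (i) lower semicontinuity and (ii) coercivity of the objective
$$F(\bc,b):=\tfrac{1}{2}\bc^{\top}\bK\bc+C\|({\bf 1}-\diag(\by)\bK\bc-b\by)_+\|_0,$$
and then extracting closedness and boundedness of the minimizer set.

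First I would verify lower semicontinuity. The scalar function $\ell_{0/1}$ is lsc: on $\{t>0\}$ and $\{t<0\}$ it is locally constant, and at $t=0$ the value $0$ is dominated by any limit since $\ell_{0/1}\ge 0$. Hence $L_{0/1}$ defined in \eqref{L01norm} is lsc on $\bR^m$, and composing with the continuous affine map $(\bc,b)\mapsto {\bf 1}-\diag(\by)\bK\bc-b\by$ preserves lsc. The quadratic term $\frac{1}{2}\bc^{\top}\bK\bc$ is continuous, so $F$ is lsc.

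Next I would establish coercivity. The strict positive definiteness of $K$ on $\bR^d$ applied to the sample points $\bx_1,\dots,\bx_m$ forces $\bK$ to be positive definite with smallest eigenvalue $\lambda_{\min}(\bK)>0$. Consequently $\frac{1}{2}\bc^{\top}\bK\bc\ge \tfrac{1}{2}\lambda_{\min}(\bK)\|\bc\|_2^2$, while the $\ell_0$ term is uniformly bounded by $m$. Together with $b\in[-M,M]$ this gives $F(\bc,b)\to\infty$ whenever $\|\bc\|_2\to\infty$. Since the feasible set $\bR^m\times[-M,M]$ is closed and $F$ is proper (finite at any admissible point), the Weierstrass theorem yields existence of a minimizer $F^\star:=\min F<+\infty$.

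Finally, for compactness of the solution set $S:=\{(\bc,b)\in\bR^m\times[-M,M]:F(\bc,b)\le F^\star\}$, I would argue that $S$ is closed because it is a lower level set of the lsc function $F$ intersected with the closed set $\bR^m\times[-M,M]$, and bounded because any $(\bc,b)\in S$ satisfies $\tfrac{1}{2}\lambda_{\min}(\bK)\|\bc\|_2^2\le F^\star$ while $b$ lies in the compact interval $[-M,M]$. The only subtle point is checking lsc of $\ell_{0/1}$ at its jump $t=0$; all other steps are routine once the strict positive definiteness of $K$ is used to upgrade $\bK\succeq 0$ to $\bK\succ 0$.
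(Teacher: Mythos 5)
Your proposal is correct and follows essentially the same route as the paper: both reduce to the Weierstrass theorem after using strict positive definiteness to get $\lambda_{\min}(\bK)>0$ and hence boundedness of the relevant sublevel set of the objective on $\bR^m\times[-M,M]$. Your version is in fact slightly more complete, since you explicitly verify the lower semicontinuity of $\ell_{0/1}$ (and hence of the objective) and the closedness of the minimizer set, steps the paper leaves implicit in its appeal to the Weierstrass theorem.
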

    \begin{proof}
     Denote
     \[
     \Psi(\bc,b)\!:=\!\frac{1}{2}\bc^{\top}\bK\bc\!+\!C\|({\bf 1}\!-\!\diag(\by)\bK\bc-b\by)_+\|_0.
     \]
     Notice that
     \[
     \min_{\bc\in\bR^{m},b\in\bR}\Psi (\bc,b)\le \Psi({\bf{0}},b)\leq Cm<\infty.
     \]
     Denote
     \[
     S:=\{(\bc,b)\in \bR^{m}\times [-M,M] : \Psi(\bc,b)\leq Cm\}.
     \]
     Clearly, $S$ is nonempty.
     Since for any $(\bc,b)\in S$,
     \[
     \frac{1}{2}\lambda_{\rm min}(\bK)\|\bc\|^2\leq  \frac{1}{2} {\bc}^{\top}\bK \bc\leq  \Psi(\bc, b)\leq Cm,
     \]
     we can conclude $\bc$ is bounded and then $S$ is also bounded.
     Hence, the globally optimal solution  to \eqref{SVM01eq1} exists and the solution set is nonempty and compact by the Weierstrass Theorem \cite[Theorem 1.14]{Dhara2011}.
    \end{proof}
    
    For the linear SVM, we have $\bw=\sum_{i=1}^m c_i \bx_i$ in Theorem \ref{WReTh}.
    The existence of the optimal solution $(\bw,b)$ to $\ell_0$-KSVM \eqref{SVM01} with the linear kernel was proved in \cite[Theorem 3.1]{Wang2022}. 
    We should point out that the strict positive definiteness condition imposed on the kernel $K$ on $\bR^d$ is a very mild condition for a class of translation-invariant kernels. A continuous function $\Phi:\bR^d\to\bR$ is called (strictly) positive definite on $\bR^d$ if $K(\bz,\bz')=\Phi(\bz-\bz')$ is (strictly) positive definite on $\bR^d$.
    The positive definiteness of translation-invariant kernels can be characterized by the celebrated Bochner theorem in \cite{Wendland2005}.
    A continuous function $\Phi:\bR^d\to\bR$ is positive definite if
    and only if it is the Fourier transform of a finite nonnegative Borel measure $\mu$ on $\bR^d$ \cite[Theorem 6.6]{Wendland2005}, that is,
    $$
    \Phi(\bz)=\int_{\bR^d}e^{-\text{i}\bz^{\top}\xi}d\mu(\xi),\ \bz\in\bR^d
    $$
    where $\text{i}$ is the imaginary unit. Furthermore, such a $\Phi$ is strictly positive definite if the support $\supp\mu$ of measure $\mu$ has a positive Lebesgue measure. As a result, commonly used examples of strictly positive definite kernels include the Gaussian kernel $e^{-\rho\|\bz-\bz'\|^2}$, the exponential kernel $e^{-\rho\|\bz-\bz'\|}$, the Laplacian kernel $e^{-\rho\|\bz-\bz'\|_1}$ with $\rho>0$, the inverse multiquadric kernel $(c^2+\|\bz-\bz'\|^2)^{-\beta}$ with $c>0$ and $\beta>0$, and the compactly supported radial basis functions (see, e.g., Chapters 6 and 9 in \cite{Wendland2005}). However, the polynomial kernel is not strictly positive definite.

     \section{Optimality of \texorpdfstring{$\ell_0$}{}-KSVM}

   In this section, based on the characterization for the limiting subdifferential of $\ell_0$-norm hinge loss, we are able to prove the equivalent relationship among the
   the proximal stationary point, Karush-Kuhn-Tucker point, and the local minimizer.
   This is central to our theoretical analysis and algorithmic design.

    Before moving on, we present some notations. Set $\bR^m_+:=\{\bz\in\bR^m: z_i\ge 0\mbox{ for any }i\in\bN_m\}$ and $\bR^m_{-}:=\{\bz\in\bR^m: z_i\le 0\mbox{ for any }i\in\bN_m\}$.
    For every index set $I\subseteq \bN_m$, we denote by $|I|$ the number of elements in $I$ and $\overline{I}\!:=\bN_m\backslash I$ the complement of $I$.  For a vector $\bz\in \bR^{m}$, $\bz_{I}\in \bR^{|I|}$ is the vector consisting of the entries $z_i$ for $i\in I$.
     For a given $\bz\in\mathbb{R}^l$, $U_{\delta}(\bz)$ denotes the open ball
     of radius $\delta$ centered at $\bz$ on the norm $\|\cdot\|$.
     For a closed set $S\subseteq \mathbb{R}^l$ and a point $\bz\in \mathbb{R}^l$,
     ${\rm dist}(\bz,S):=\min\limits_{\bx\in S}\|\bz-\bx\|$ means the distance of $\bz$ from the set $S$.
     For a differentiable mapping $g\!:\mathbb{R}^l\to\mathbb{R}^m$,
     $\nabla\!g(\bz)$ denotes the transpose of Jacobian of $g$ at $\bz$.

    To characterize the limiting subdifferential of the $\ell_0$-norm hinge loss function,
    we recall some well-known notations of variational analysis and generalized differentiation utilized throughout the paper. The reader is referred to \cite{1998Variational,mordukhovich1994generalized,Liu2019,Liu2020} for more details.

    \begin{definition}
    (see \cite[Definition 1.5 $\&$ Lemma 1.7]{1998Variational}) Suppose that  a function $f\!:\mathbb{R}^l\to [-\infty,+\infty]$ and $\overline{\bz}\in \mathbb{R}^l$.
    The lower limit of $f$ at $\overline{\bz}$ is defined by
    \begin{align*}
    &\liminf\limits_{\bz\to \overline{\bz}}f(\bz):=\lim\limits_{\delta\downarrow 0}
    (\inf\limits_{\bz\in U_{\delta}(\overline{\bz})} f(\bz))\\
    &\quad=\min\{\alpha\in [-\infty, \infty]\,:\, \exists\, \bz^k\to \overline{\bz} \mbox{ with } f(\bz^k)\to \alpha\}.
    \end{align*}
    The function $f$ is lower semcontinuous(lsc) at $\overline{\bz}$ if
    \[
    \liminf\limits_{\bz\to \overline{\bz}}f(\bz)\geq f(\overline{\bz}),
    {\,\rm or\, equivalently\,} \liminf\limits_{\bz\to \overline{\bz}}f(\bz)=f(\overline{\bz}).
    \]
     The upper limit of $f$ at $\overline{\bz}$ is defined by
    \begin{align*}
    &\liminf\limits_{\bz\to \overline{\bz}}f(\bz):=\lim\limits_{\delta\downarrow 0}
    (\sup\limits_{\bz\in U_{\delta}(\overline{\bz})} f(\bz))\\
    &\quad=\max\{\alpha\in [-\infty, \infty]\,:\, \exists\, \bz^k\to \overline{\bz} \mbox{ with } f(\bz^k)\to \alpha\}.
    \end{align*}
    \end{definition}

    \vspace{-0.2cm}

   \begin{definition}\label{Gsubdiff-def}
    Consider a function $f\!:\mathbb{R}^l\to(-\infty,+\infty]$ and a point
    $\overline{\bz}\in{\rm dom}f\!:=\!\{\bz\in\mathbb{R}^l\!:\!f(\bz)<\infty\}$.
    The regular subdifferential of $f$ at $\overline{\bz}$
   is defined as
   \begin{align*}
    \widehat{\partial}f(\overline{\bz})&:=\Big\{\bv\in\mathbb{R}^l:
    \liminf_{\overline{\bz}\ne \bz\to \overline{\bz}}
    \frac{f(\bz)-f(\overline{\bz})-\langle \bv,\bz-\overline{\bz}\rangle}{\|\bz-\overline{\bz}\|}\ge 0\Big\}.
   \end{align*}
   The basic (known as the limiting) subdifferential of $f$ at $\overline{\bz}$ is defined as
   \[
    \partial f(\overline{\bz})\!:=\!\Big\{\bv\in\mathbb{R}^l: \exists\,\bz^k\!\xrightarrow[f]{}\overline{\bz}\ {\rm and}\
    \bv^k\!\in\!\widehat{\partial}f(\bz^k)\ {\rm with}\ \bv^k\!\to\! \bv\Big\},
   \]
   \end{definition}
   where  $\bz^k\xrightarrow[f]{}\overline{\bz}$ means that
   \(
     \bz^k\to \overline{\bz} {\;\rm with\;} f(\bz^k)\to f(\overline{\bz}).
   \)

  \begin{definition}\label{NormDef}
    Let $S\subseteq\mathbb{R}^l$ be a given set.
   Consider an arbitrary $\overline{\bz}\in S$.
   The regular/Fr\'{e}chet normal cone to $S$ at $\overline{\bz}$ is defined by
   \[
     \widehat{\mathcal{N}}_{S}(\overline{\bz})
     :=\big\{\bv\in\mathbb{R}^l\,:\, \limsup_{\bz'\xrightarrow[S]{}\overline{\bz}}\frac{\langle \bv,\bz'-\overline{\bz}\rangle}{\|\bz'-\overline{\bz}\|}\leq 0\big\}
    \]
   and the limiting/Mordukhovich normal cone to $S$ at $\overline{\bz}$ is defined as
   $$
   \begin{array}{ll}
     &\displaystyle{\mathcal{N}_{S}(\overline{\bz})}
     \displaystyle{:=\limsup_{\bz\xrightarrow[S]{}\overline{\bz}}\widehat{\mathcal{N}}_{S}(\bz)  }\\
     &\quad\displaystyle{=\Big\{\bv\in \mathbb{R}^l\,:\, \exists\, \bz^k\xrightarrow[S]{}\overline{\bz},\bv^k\to \bv \mbox{ with } \bv^k\in \widehat{\mathcal{N}}_{S}(\bz^k)\Big\},}
    \end{array}
    $$
  \end{definition}
   where $ \bz^k \xrightarrow[S]{}\overline{\bz}$ means that
    \(
   \bz^k\to \overline{\bz} {\;\rm with \;} \bz^k\in S.
    \)

   When $f=\delta_{S}$ is the indicator function for a set $S\!\subseteq\mathbb{R}^l$, i.e., $\delta_S(\bz)\!=\!0$ if $\bz\!\in \!S$, otherwise $\delta_S(\bz)\!=\!+\infty$,
   the subdifferentials
    $\widehat{\partial}f(\bz)$ and $\partial f(\bz)$    reduce to  the regular
    normal cone $\widehat{\mathcal{N}}_{S}(\bz)$ and the limiting normal cone $\mathcal{N}_{S}(\bz)$, respectively.

    \begin{definition}
    Let $\mathcal{G}\!:\mathbb{R}^{l}\rightrightarrows\mathbb{R}^n$ be a given multifunction.
    the multifunction $\mathcal{G}$ is called metrically subregular
     at $\overline{\bz}$ for $\overline{\bp}\in\mathcal{G}(\overline{\bz})$ if there exists
    a constant $\kappa\ge 0$ along with $\delta>0$ such that
   \begin{equation*}\label{subregular}
    {\rm dist}(\bz,\mathcal{G}^{-1}(\overline{\bp}))\!\leq\! \kappa{\rm dist}(\overline{\bp},\mathcal{G}(\bz))
   \,{\rm for\, any}\, \bz\!\in \!U_{\delta}(\overline{\bz}).
  \end{equation*}
    \end{definition}
   Metrical subregularity was introduced
   by Ioffe in \cite{ioffe1979regular} (under a different name) as a constraint qualification
   related to equality constraints in nonsmooth optimization problems, and was later
   extended in \cite{dontchev2004regularity} to generalized equations.
   The metrical subregularity of a multifunction has already been studied by many authors
  under various names (see, e.g., \cite{henrion2005calmness,ioffe2008metric,gfrerer2011first,bai2019directional}
  and the references therein).

   \subsection{Limiting subdifferential of the \texorpdfstring{$\ell_0$}{}-norm hinge loss}

    In this subsection, we will drive the explicit expression of  the limiting subdifferential of
    the $\ell_0$-norm hinge loss function. To this end, we consider
    the following general composite mapping  $\varphi: \mathbb{R}^l\to \mathbb{R}$ by
    \begin{equation}\label{varphiDef}
    \varphi(\bz):=CL_{0/1}(g(\bz)), \mbox{ for any }\bz\in \mathbb{R}^{l}
    \end{equation}
    where $g:\mathbb{R}^{l}\to \mathbb{R}^{m}$ is a differentiable map.

   \begin{lemma}\label{subdiff-Cgz}
   Suppose that  $\varphi: \mathbb{R}^l\to \mathbb{R}$ is defined as in \eqref{varphiDef}.
   Fix any $\overline{\bz}\in\!\mathbb{R}^l$. Write
   ${\rm ps}(g(\overline{\bz})):=\{i\in \bN_m\,:\, g_i(\overline{\bz})>0\}$ and $I\!:={\rm ps}(g(\overline{\bz}))$
   and $\Theta_{\overline{I}}\!:=\!\big\{\bz\in\!\mathbb{R}^l\,:\,g_{\overline{I}}(\bz)\leq 0\big\}$
   with $g_{\overline{I}}(\bz):=(g(\bz))_{\overline{I}}$.
    Then,
  \begin{itemize}
    \item [(i)] $\widehat{\partial}\varphi(\overline{\bz})
           =\widehat{\mathcal{N}}_{\Theta_{\overline{I}}}(\overline{\bz})$
                and $\partial\varphi(\overline{\bz})=\mathcal{N}_{\Theta_{\overline{I}}}(\overline{\bz})$.

   \item[(ii)] When the mapping $\mathcal{G}(\bz)\!:=\!g_{\overline{I}}(\bz)-\mathbb{R}_{-}^{|\overline{I}|}$
                is subregular at $\overline{\bz}$ for the origin,
                \begin{align}\label{L01SubEq}
                 \widehat{\partial}\varphi(\overline{\bz})
                  =\partial\varphi(\overline{\bz})
                =\nabla\!g_{\overline{I}}(\overline{\bz})
                 \mathcal{N}_{\mathbb{R}_{-}^{|\overline{I}|}}(g_{\overline{I}}(\overline{\bz})).
               \end{align}
   \end{itemize}
   \end{lemma}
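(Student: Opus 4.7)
My plan is to first localize the function $\varphi$ to a neighborhood of $\overline{\bz}$, reducing it to an indicator–like object associated with the set $\Theta_{\overline{I}}$, and then to invoke a chain rule for normal cones in part (ii).

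For the local reduction, I would exploit continuity of $g$: for $i\in I={\rm ps}(g(\overline{\bz}))$ we have $g_i(\overline{\bz})>0$, so $\ell_{0/1}(g_i(\bz))\equiv 1$ on some ball $U_\delta(\overline{\bz})$; for $i\in\overline{I}$ with $g_i(\overline{\bz})<0$ we have $\ell_{0/1}(g_i(\bz))\equiv 0$ on (a smaller) $U_\delta(\overline{\bz})$. Hence on $U_\delta(\overline{\bz})$,
$$\varphi(\bz)=C|I|+C\sum_{i\in I_0}\ell_{0/1}(g_i(\bz)),\qquad I_0:=\{i\in\overline{I}:g_i(\overline{\bz})=0\},$$
so that $\varphi(\bz)=C|I|=\varphi(\overline{\bz})$ when $\bz\in\Theta_{\overline{I}}$ and $\varphi(\bz)\ge C|I|+C$ otherwise.

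For part (i), I would argue both inclusions directly from Definition~\ref{Gsubdiff-def}. Given $\bv\in\widehat{\partial}\varphi(\overline{\bz})$, restricting the $\liminf$ to sequences inside $\Theta_{\overline{I}}$ (on which $\varphi-\varphi(\overline{\bz})=0$) yields precisely the defining inequality of $\widehat{\mathcal{N}}_{\Theta_{\overline{I}}}(\overline{\bz})$. Conversely, if $\bv\in\widehat{\mathcal{N}}_{\Theta_{\overline{I}}}(\overline{\bz})$, then for $\bz\to\overline{\bz}$ off $\Theta_{\overline{I}}$ the ratio is bounded below by $\bigl(C-\|\bv\|\|\bz-\overline{\bz}\|\bigr)/\|\bz-\overline{\bz}\|\to+\infty$, which gives $\bv\in\widehat{\partial}\varphi(\overline{\bz})$. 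This establishes $\widehat{\partial}\varphi(\overline{\bz})=\widehat{\mathcal{N}}_{\Theta_{\overline{I}}}(\overline{\bz})$. For the limiting counterpart, I would observe that $\varphi$ is locally integer–valued on the scale $C$, so the condition $\bz^k\xrightarrow[\varphi]{}\overline{\bz}$ forces $\varphi(\bz^k)=\varphi(\overline{\bz})$ for large $k$, i.e.\ $\bz^k\in\Theta_{\overline{I}}$. Moreover, for such $\bz^k$ the active index set ${\rm ps}(g(\bz^k))$ stabilizes to $I$, so applying the first identity at $\bz^k$ gives $\widehat{\partial}\varphi(\bz^k)=\widehat{\mathcal{N}}_{\Theta_{\overline{I}}}(\bz^k)$, and passing to the outer limit yields $\partial\varphi(\overline{\bz})=\mathcal{N}_{\Theta_{\overline{I}}}(\overline{\bz})$.

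For part (ii), I would apply the standard normal–cone chain rule for preimages $\Theta_{\overline{I}}=g_{\overline{I}}^{-1}(\mathbb{R}_-^{|\overline{I}|})$. The inclusion
$$\widehat{\mathcal{N}}_{\Theta_{\overline{I}}}(\overline{\bz})\supseteq \nabla g_{\overline{I}}(\overline{\bz})\,\widehat{\mathcal{N}}_{\mathbb{R}_-^{|\overline{I}|}}(g_{\overline{I}}(\overline{\bz}))$$
is always valid by the definition of the regular normal cone and differentiability of $g$. The reverse inclusion
$$\mathcal{N}_{\Theta_{\overline{I}}}(\overline{\bz})\subseteq \nabla g_{\overline{I}}(\overline{\bz})\,\mathcal{N}_{\mathbb{R}_-^{|\overline{I}|}}(g_{\overline{I}}(\overline{\bz}))$$
is exactly the consequence of metric subregularity of the constraint mapping $\mathcal{G}$ at $\overline{\bz}$ for the origin; I would cite the pre-image chain rule (e.g.\ Henrion--Jourani--Outrata, or Mordukhovich's generalized differentiation framework) which yields this inclusion under that constraint qualification. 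Finally, since $\mathbb{R}_-^{|\overline{I}|}$ is convex we have $\widehat{\mathcal{N}}_{\mathbb{R}_-^{|\overline{I}|}}=\mathcal{N}_{\mathbb{R}_-^{|\overline{I}|}}$, and the chain of inclusions $\widehat{\partial}\varphi(\overline{\bz})\subseteq\partial\varphi(\overline{\bz})$ collapses to equality, giving \eqref{L01SubEq}.

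The main obstacle is the subtlety in the limiting subdifferential calculation for (i): one must track both the stability of the active index set ${\rm ps}(g(\bz^k))$ along approach sequences and the fact that ``$\bz^k\xrightarrow[\varphi]{}\overline{\bz}$'' actually selects sequences inside $\Theta_{\overline{I}}$; this discreteness argument is what allows one to pass from a pointwise identity for $\widehat{\partial}\varphi$ to an identity for $\partial\varphi$ without additional qualifications. In (ii) the technical step is the application of metric subregularity to derive the preimage chain rule, which I would quote rather than rederive.
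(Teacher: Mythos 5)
Your proposal is correct and follows essentially the same route as the paper's proof: a local identification of the level set $\{\varphi=\varphi(\overline{\bz})\}$ with $\Theta_{\overline{I}}$, a definition-chasing computation of $\widehat{\partial}\varphi(\overline{\bz})$, the integer-valuedness of $\varphi$ to force $\varphi$-attentive sequences into $\Theta_{\overline{I}}$ with stabilized index set for the limiting subdifferential, and the preimage normal-cone chain rule under metric subregularity (together with \cite[Theorem 6.14]{1998Variational} for the easy inclusion) in part (ii). The only difference is cosmetic: your unified blow-up argument for the ratio off $\Theta_{\overline{I}}$ absorbs the paper's separate treatment of the case where $\Theta_{\overline{I}}$ is locally the singleton $\{\overline{\bz}\}$.
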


   The Lemma \ref{subdiff-Cgz} is motivated by Lemma 2.2 in \cite{Liu2022} or Proposition 3.2 in \cite{Wu2021}.
   For completeness,  its proof is given in Appendix \ref{AppendixA}.

   \begin{remark}\label{remark-Varphi}
    When  $g$ is an affine mapping given by
   \begin{equation*}
    g(\bz)=B\bz\!-\bd\ \ {\rm for}\ B\in\mathbb{R}^{m\times l}\ {\rm and}\ \bd\in\mathbb{R}^m,
    \end{equation*}
    we know from \cite[Page 211]{ioffe2008metric} and \cite[Corollary 3]{1999Strong} that
    the mapping $\mathcal{G}$ in Lemma \ref{subdiff-Cgz} (ii) is metrically subregular at
    $\overline{\bz}\in \bR^{l}$ for the origin.
   \end{remark}

   In Lemma \ref{subdiff-Cgz} and Remark \ref{remark-Varphi}, taking $g(\bz)\!=\bz$ for any $\bz\!\in\! \mathbb{R}^{l}$,
  we have the explicit expression of  the limiting subdifferential of
    $\ell_0$-norm hinge loss function.

  \begin{theorem}\label{SubDifL01Th1}
  For any $\bu\!\in \!\bR^m$, write $\overline{I}_0\!=\!\{i\!\in\! \bN_m : u_i=0\}$. Then, we have
  \begin{equation*}
  \partial L_{0/1}(\bu)\!=\!\{\lambda\!\in\! \mathbb{R}^m: \lambda_i\!\in\!\bR_+, i\!\in\! \overline{I}_0,\,{\rm otherwise}\, \lambda_i=0 \}.
  \end{equation*}
  In particular, we obtain the limiting subdifferential of $\ell_0$-norm hinge loss function with
  for any $t\in \bR$
  $$
   \partial \ell_{0/1}(t)\left\{\begin{array}{ll}
   =0,& \mbox{ if } t\ne0,\\
   \in \bR_+,&\mbox{ if } t=0.\\
   \end{array}\right.
  $$
  \end{theorem}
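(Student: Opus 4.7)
The plan is to specialize Lemma~\ref{subdiff-Cgz} to the identity map $g(\bz)=\bz$. With this choice, $g$ is affine (of the form $B\bz-\bd$ with $B=I$ and $\bd=0$), so Remark~\ref{remark-Varphi} guarantees that the constraint mapping $\mathcal{G}(\bz)=g_{\overline{I}}(\bz)-\bR_{-}^{|\overline{I}|}$ is metrically subregular at every $\overline{\bz}$ for the origin. Consequently Lemma~\ref{subdiff-Cgz}(ii) applies and yields, with $C=1$,
\begin{equation*}
\partial L_{0/1}(\bu)
=\nabla g_{\overline{I}}(\bu)\,\mathcal{N}_{\bR_{-}^{|\overline{I}|}}\!\bigl(g_{\overline{I}}(\bu)\bigr),
\end{equation*}
where $I={\rm ps}(\bu)=\{i\in\bN_m:u_i>0\}$ and $\overline{I}=\{i\in\bN_m:u_i\le 0\}$.

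Next I would carry out the two elementary computations. First, $\nabla g_{\overline{I}}(\bu)$ is the $m\times|\overline{I}|$ selection matrix whose columns are the standard basis vectors $\be_i$ for $i\in\overline{I}$; it therefore embeds a vector $\bmu\in\bR^{|\overline{I}|}$ into $\bR^m$ by placing its entries at the coordinates indexed by $\overline{I}$ and setting the rest to zero. Second, the normal cone to the negative orthant at $\bu_{\overline{I}}$ is the standard object: if $j$ corresponds to $i\in\overline{I}$, then $\mu_j\ge 0$ when $u_i=0$ (active constraint) and $\mu_j=0$ when $u_i<0$ (inactive constraint). Combining these two facts, an element $\lambda\in\partial L_{0/1}(\bu)$ must satisfy $\lambda_i=0$ for $i\in I$ (i.e.\ $u_i>0$), $\lambda_i=0$ for $i\in\overline{I}$ with $u_i<0$, and $\lambda_i\ge 0$ for $i\in\overline{I}$ with $u_i=0$. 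Since the set $\overline{I}_0=\{i:u_i=0\}$ is precisely the subset of $\overline{I}$ on which the normal cone is nontrivial, this reduces exactly to the formula
\begin{equation*}
\partial L_{0/1}(\bu)=\{\lambda\in\bR^m:\lambda_i\in\bR_{+}\text{ for }i\in\overline{I}_0,\ \lambda_i=0\text{ otherwise}\}.
\end{equation*}
The scalar statement then follows by specialising to $m=1$: if $t\ne 0$ the index set $\overline{I}_0$ is empty and $\partial\ell_{0/1}(t)=\{0\}$, while if $t=0$ it gives $\partial\ell_{0/1}(0)=\bR_{+}$.

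The only delicate point is ensuring that the coordinates with $u_i<0$ contribute only $\{0\}$, not a full $\bR_+$, even though they belong to $\overline{I}$. The subregularity-plus-affine-$g$ hypothesis of Lemma~\ref{subdiff-Cgz}(ii) is precisely what lets one pass from the set-theoretic description $\partial\varphi(\overline{\bz})=\mathcal{N}_{\Theta_{\overline{I}}}(\overline{\bz})$ in part~(i) to the coordinate-wise factorisation through $\mathcal{N}_{\bR_{-}^{|\overline{I}|}}$ in part~(ii); the strict inequality $u_i<0$ then kills the corresponding component of the normal cone through the standard complementarity description of $\mathcal{N}_{\bR_{-}^{|\overline{I}|}}$. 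No further obstacle arises, so the proof reduces to the invocation of Lemma~\ref{subdiff-Cgz} plus the routine identification of the normal cone of the nonpositive orthant.
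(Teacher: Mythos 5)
Your proposal is correct and follows exactly the route the paper itself takes: the paper derives Theorem~\ref{SubDifL01Th1} by specializing Lemma~\ref{subdiff-Cgz}(ii) together with Remark~\ref{remark-Varphi} to the identity map $g(\bz)=\bz$, and your identification of $\nabla g_{\overline{I}}$ as a selection matrix and of $\mathcal{N}_{\mathbb{R}_{-}^{|\overline{I}|}}$ via the usual active/inactive complementarity supplies the routine computation the paper leaves implicit. No gaps.
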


  In Lemma \ref{subdiff-Cgz} and Remark \ref{remark-Varphi},
  taking
  \[
  g(\bc,b)\!=\!{\bf 1}-\diag(\by)\bK\bc-b\by
  \]
 for any $(\bc,b)\!\in\! \mathbb{R}^{m}\times \bR$  we have the following conclusion.

  \begin{theorem}\label{SubDifL01Th2}
   For any $(\bc,b)\!\in\!\!\mathbb{R}^m\times \bR$. Write
   $I\!:=\!\{i\in \bN_m: g_i(\bc, b)>0\}$,
    $\overline{I}_0\!:=\!\{i\in \bN_m: g_i(\bc, b)=0\}$ and
     $\overline{I}_{<}\!:=\!\{i\in \bN_m: g_i(\bc, b)<0\}$.
   Then, $\bv\in \partial \varphi(\bc,b)$ if and only if there exist $\boldsymbol{\lambda}\in \bR^m$
    such that
   \begin{align*}
       \bv=\begin{pmatrix}\bK{\rm diag}(\by)\\\by^{\top}\end{pmatrix}\boldsymbol{\lambda}
  \mbox{ with }
  \lambda_i\in \left\{\begin{array}{cl}
        \bR_- ,   &  {\rm if\;} i\in \overline{I}_0,\\
        0 ,   &  {\rm if\;} i\in I\cup\overline{I}_{<}.
       \end{array}\right.
   \end{align*}
   \end{theorem}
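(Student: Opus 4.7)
The plan is to obtain Theorem~\ref{SubDifL01Th2} as a direct application of Lemma~\ref{subdiff-Cgz}(ii) together with Remark~\ref{remark-Varphi}, after identifying the composite structure and computing the ingredients explicitly.

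First, I would observe that the map
\[
g(\bc,b)=\mathbf{1}-\diag(\by)\bK\bc-b\by
\]
is affine in $(\bc,b)\in\mathbb{R}^m\times\mathbb{R}$. In the notation of Remark~\ref{remark-Varphi}, it fits the form $g(\bz)=B\bz-\bd$ with $\bz=(\bc,b)$. Therefore the associated multifunction $\mathcal{G}(\cdot)=g_{\overline{I}}(\cdot)-\mathbb{R}_{-}^{|\overline{I}|}$ is metrically subregular at every point $(\bc,b)$ for the origin, and Lemma~\ref{subdiff-Cgz}(ii) yields
\[
\partial\varphi(\bc,b)=\nabla g_{\overline{I}}(\bc,b)\,\mathcal{N}_{\mathbb{R}_{-}^{|\overline{I}|}}\!\big(g_{\overline{I}}(\bc,b)\big),
\]
where $\overline{I}=\overline{I}_0\cup\overline{I}_{<}$, that is, the complement of $I=\{i\in\bN_m:g_i(\bc,b)>0\}$.

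Next, I would compute the two factors. Differentiating $g_i(\bc,b)=1-y_i(\bK\bc)_i-b y_i$ with respect to $(\bc,b)$ and using the symmetry of $\bK$, the transposed Jacobian is
\[
\nabla g(\bc,b)=-\begin{pmatrix}\bK\diag(\by)\\ \by^{\top}\end{pmatrix},
\]
whose $i$-th column is $-(\bK\diag(\by)e_i,\,y_i)^{\top}$. Selecting the columns indexed by $\overline{I}$ gives $\nabla g_{\overline{I}}(\bc,b)$. For the normal-cone factor, since $\mathbb{R}_{-}^{|\overline{I}|}$ is a product of $|\overline{I}|$ copies of $\mathbb{R}_{-}$ and $g_i(\bc,b)=0$ for $i\in\overline{I}_0$ (a boundary point of $\mathbb{R}_{-}$) while $g_i(\bc,b)<0$ for $i\in\overline{I}_{<}$ (an interior point), a standard product rule gives
\[
\boldsymbol{\mu}\in\mathcal{N}_{\mathbb{R}_{-}^{|\overline{I}|}}\!\big(g_{\overline{I}}(\bc,b)\big)\iff \mu_i\in\mathbb{R}_{+}\text{ for }i\in\overline{I}_0,\ \mu_i=0\text{ for }i\in\overline{I}_{<}.
\]

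Finally, I would absorb the minus sign in $\nabla g_{\overline{I}}(\bc,b)$ into the multiplier and extend by zeros on $I$. Setting $\lambda_i=-\mu_i$ for $i\in\overline{I}_0$, $\lambda_i=0$ for $i\in I\cup\overline{I}_{<}$, yields a $\boldsymbol{\lambda}\in\mathbb{R}^m$ with $\lambda_i\in\mathbb{R}_{-}$ on $\overline{I}_0$ and $\lambda_i=0$ elsewhere, and
\[
\bv=\nabla g_{\overline{I}}(\bc,b)\boldsymbol{\mu}=\begin{pmatrix}\bK\diag(\by)\\ \by^{\top}\end{pmatrix}\boldsymbol{\lambda},
\]
where the entries of $\boldsymbol{\lambda}$ on $I$ contribute nothing because they are zero. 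This correspondence is clearly a bijection between the admissible $\boldsymbol{\mu}$ and the admissible $\boldsymbol{\lambda}$, which establishes both inclusions in the desired characterization.

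There is no serious obstacle: the argument is a specialization of Lemma~\ref{subdiff-Cgz}(ii). The only things to watch are (i) the appeal to Remark~\ref{remark-Varphi} to certify the subregularity qualification required for Lemma~\ref{subdiff-Cgz}(ii), and (ii) bookkeeping of signs and index sets when identifying the resulting expression with the form stated in the theorem, in particular recognizing that the coordinates of $\boldsymbol{\lambda}$ on $I$ can be taken freely as zero since the corresponding columns of $\nabla g(\bc,b)$ are not present in $\nabla g_{\overline{I}}(\bc,b)$.
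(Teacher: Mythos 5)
Your proposal is correct and follows essentially the same route as the paper, which obtains Theorem~\ref{SubDifL01Th2} directly by specializing Lemma~\ref{subdiff-Cgz}(ii) and Remark~\ref{remark-Varphi} to the affine map $g(\bc,b)={\bf 1}-\diag(\by)\bK\bc-b\by$. The only difference is that you write out explicitly the computation of $\nabla g_{\overline{I}}$, the normal cone to $\mathbb{R}_{-}^{|\overline{I}|}$, and the sign change $\boldsymbol{\lambda}=-\boldsymbol{\mu}$ (extended by zero on $I$), which the paper leaves implicit.
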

   \begin{remark}
   Theorem \ref{SubDifL01Th2} plays a key importance to the proof that any  local optimal solution to the problem \eqref{SVM01eq2} is also a proximal stationary point.
   In fact, according to \eqref{varphiDef}, the function $\varphi$ is a composition function  of the $\ell_0$-norm hinge loss function and a linear mapping.
   Based on the characterization of the limiting subdifferential of $\varphi$, Theorem \ref{OptiTh1} later will prove that any local optimal solution to the problem \eqref{SVM01eq2} must be a proximal stationary point without any condition. However, \cite[Theorem 3.2]{Wang2022}) only proved any global optimal solution  is also a proximal stationary point under some very strong conditions. In addition, by  the limiting subdifferential of $\varphi$,  we will propose the definition of Karush-Kuhn-Tucker (KKT) point to the problem \eqref{SVM01eq2} and then prove the equivalent
 relationship among the proximal stationary point, the KKT point, and the local optimal solution, see Theorems \ref{KKTTheorm} and  \ref{OptiTh1} later.
   \end{remark}

  \subsection{First-order optimality conditions}

   In this subsection, we will discuss the first-order optimality conditions for the problem \eqref{SVM01eq2}. To this end, we shall present the definitions of the KKT point and the proximal stationary point.

   \begin{definition}\label{KKTdef}
   Consider the problem \eqref{SVM01eq2}, the point $(\bc^*,b^*,\bu^*)$ is called a KKT point,
   if there exists a Lagrangian multiplier
    $\boldsymbol{\lambda}^*\in \bR^m$ such that
   \begin{equation}\label{KKT}
   \left\{\begin{array}{ll}
   &\bK\bc^{*}+\bK\diag(\by)\boldsymbol{\lambda}^{*}={\bf 0},\\
   &\langle \by,\boldsymbol{\lambda}^{*}\rangle=0,\\
   & \bu^{*}+\diag(\by)\bK\bc^{*}+b^{*}\by={\bf 1},\\
  & {\bf 0}\in C\partial \|\bu_+^{*}\|_0+\boldsymbol{\lambda}^{*}.
  \end{array}\right.
 \end{equation}
   \end{definition}

    \begin{definition}[$L_{0/1}$ Proximal Operator]\label{L01Proximity} For any given $\gamma,C>0$ and $\boldsymbol{\eta}\in\bR^m$, the proximal operator of $L_{0/1}$ given by \eqref{L01norm}, is defined by
     \begin{equation}\label{Prox0}
    {\rm Prox}_{\gamma C L_{0/1}}({\boldsymbol\eta})\!:=\!\arg\min_{\bv\in\bR^m} CL_{0/1}(\bv)\!+\!\frac{1}{2\gamma}\|\bv\!-\!{\boldsymbol\eta}\|^2.
    \end{equation}
    We call such an operator the $L_{0/1}$ proximal operator.
    \end{definition}
   It was proved in \cite{Wang2022} that the $L_{0/1}$ proximal operator defined by \eqref{Prox0} admits a closed form solution:
    $$
    [{\rm Prox}_{\gamma C L_{0/1}}({\boldsymbol\eta})]_i
    :=\left\{
    \begin{array}{ll}
      \{0\},   &  {\rm if\;} 0<\eta_i< \sqrt{2\gamma C}, \\
      \{0,\eta_i\},   &{\rm if\;} \eta_i=\sqrt{2\gamma C},\\
      \{\eta_i\},   & {\rm otherwise}.
    \end{array}
    \right.
   $$
    To guarantee the injectivity of the $L_{0/1}$ proximal operator, we adopt the modified version as follows:
   \begin{align}\label{Prox}
    [{\rm Prox}_{\gamma C L_{0/1}}({\boldsymbol\eta})]_i
    :=\left\{
    \begin{array}{cl}
      \{0\},   &  {\rm if\;} 0<\eta_i\leq \sqrt{2\gamma C} \\
      \{\eta_i\},   & {\rm otherwise}
    \end{array}
    \right., i\in\bN_m.
    \end{align}

    \begin{definition}\label{PstationPointDef}
    For a given $C>0$, we say $(\bc^*,b^*,\bu^*)$ is a proximal stationary point of  $\ell_0$-KSVM \eqref{SVM01eq2} if there exists a Lagrangian multiplier
    $\boldsymbol{\lambda}^*\in \bR^m$ and a constant $\gamma>0$ such that

   \begin{subnumcases}{}
   \bK\bc^{*}+\bK{\rm diag}(\by)\boldsymbol{\lambda}^{*} &=   ${\mathbf 0}$,\label{Pseq1}\\
   \langle \by,\boldsymbol{\lambda}^*\rangle & = ${\mathbf 0}$,\label{Pseq2}\\
   \bu^*+{\rm diag}(\by)\bK \bc^*+b^*\by & =  ${\bf 1}$,\label{Pseq3}\\
   {\rm Prox}_{\gamma C L_{0/1}}(\bu^*-\gamma     \boldsymbol{\lambda}^*) & =$\bu^*$,\label{Pseq4}
   \end{subnumcases}
    where ${\rm Prox}_{\gamma C L_{0/1}}(\cdot)$ is given by \eqref{Prox}.

    \end{definition}

    The following theorem reveals the equivalent relationship between the KKT point and a proximal stationary point of $\ell_0$-KSVM \eqref{SVM01eq2}.
    \begin{theorem}\label{KKTTheorm}
    Consider the problem \eqref{SVM01eq2}. The point $(\bc^*,b^*,\bu^*)$ with the  Lagrangian multiplier $\boldsymbol{\lambda}^*\in\bR^m$ is a proximal stationary  point if and only if  $(\bc^*,b^*,\bu^*)$ with  $\boldsymbol{\lambda}^*$ is also a KKT point.
    \end{theorem}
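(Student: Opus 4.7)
The first observation is that conditions \eqref{Pseq1}--\eqref{Pseq3} of Definition \ref{PstationPointDef} are literally identical to the first three lines of the KKT system \eqref{KKT}. The entire theorem therefore reduces to proving that, for a fixed triple $(\bc^*,b^*,\bu^*)$ with multiplier $\boldsymbol{\lambda}^*$, the fourth KKT condition $\mathbf{0}\in C\partial\|\bu_+^*\|_0+\boldsymbol{\lambda}^*$ is equivalent to the existence of some $\gamma>0$ with ${\rm Prox}_{\gamma C L_{0/1}}(\bu^*-\gamma\boldsymbol{\lambda}^*)=\bu^*$. My plan is to unpack each side coordinate by coordinate, using Theorem \ref{SubDifL01Th1} for the subdifferential and the closed form \eqref{Prox} for the proximal operator, and then to match them index by index.

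Writing $\overline{I}_0:=\{i\in\bN_m : u_i^*=0\}$, Theorem \ref{SubDifL01Th1} shows that the fourth KKT condition is equivalent to the sign pattern
\begin{equation*}
(\star)\qquad \lambda_i^*\le 0 \text{ for all } i\in\overline{I}_0, \qquad \lambda_i^*=0 \text{ for all } i\notin\overline{I}_0,
\end{equation*}
with no size restriction on $|\lambda_i^*|$. For the prox side, I would set $\eta_i:=u_i^*-\gamma\lambda_i^*$ and substitute into \eqref{Prox}. A three-case split on the sign of $u_i^*$ yields: if $u_i^*>0$, attaining the prox value $u_i^*$ forces the ``otherwise'' branch, hence $\lambda_i^*=0$ together with the threshold $u_i^*>\sqrt{2\gamma C}$; if $u_i^*<0$, then $\eta_i<0$ automatically sits in the identity branch, so again $\lambda_i^*=0$; and if $u_i^*=0$, the prox hits $0$ either through the identity branch with $\eta_i=0$ (forcing $\lambda_i^*=0$) or through the shrinkage branch with $0<-\gamma\lambda_i^*\le\sqrt{2\gamma C}$ (giving $-\sqrt{2C/\gamma}\le\lambda_i^*<0$). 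Combining across coordinates, the prox condition is equivalent to $(\star)$ augmented by the two threshold bounds $u_i^*>\sqrt{2\gamma C}$ for every $i$ with $u_i^*>0$ and $|\lambda_i^*|\le\sqrt{2C/\gamma}$ for every $i\in\overline{I}_0$.

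The ``proximal stationary $\Rightarrow$ KKT'' direction is then immediate, since the coordinate-wise characterization above contains $(\star)$. For the converse direction, given $(\star)$ I would choose any positive $\gamma$ with
\begin{equation*}
\gamma<\min\!\Big\{\tfrac{(u_i^*)^2}{2C}\,:\,u_i^*>0\Big\} \ \text{ and }\ \gamma\le\min\!\Big\{\tfrac{2C}{(\lambda_i^*)^2}\,:\,i\in\overline{I}_0,\,\lambda_i^*<0\Big\},
\end{equation*}
with the convention that the minimum over an empty index set equals $+\infty$; both bounds are strictly positive, so such a $\gamma$ exists and makes all threshold inequalities hold simultaneously. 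The step requiring the most care is the coordinate $u_i^*=0$ in unpacking the prox condition: one must recognise that two disjoint branches of \eqref{Prox} both produce the output $0$, and that the union of the multiplier regimes they allow, namely $\{0\}\cup[-\sqrt{2C/\gamma},0)$, sweeps out the entire half-line $(-\infty,0]$ as $\gamma\to 0^+$, thereby matching the KKT sign constraint with no leftover size gap.
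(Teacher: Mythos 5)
Your proposal is correct, and it reaches the same converse construction as the paper but takes a genuinely different route for the forward direction. The paper proves ``proximal stationary $\Rightarrow$ KKT'' by viewing \eqref{Pseq4} through the defining minimization \eqref{Prox0} and invoking the generalized Fermat rule (\cite[Theorem 10.1]{1998Variational}) to extract $\mathbf{0}\in C\partial\|\bu_+^*\|_0+\boldsymbol{\lambda}^*$; you instead unpack the closed form \eqref{Prox} coordinate by coordinate and observe that the resulting characterization (the sign pattern $(\star)$ plus the threshold inequalities $u_i^*>\sqrt{2\gamma C}$ on $\{i:u_i^*>0\}$ and $|\lambda_i^*|\le\sqrt{2C/\gamma}$ on $\overline{I}_0$) trivially contains $(\star)$. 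Your version is more elementary --- it needs no subdifferential sum rule --- and it has the added benefit of isolating exactly which extra constraints the prox condition imposes beyond the KKT inclusion, which is precisely the information the converse must discharge by shrinking $\gamma$. For the converse both arguments use Theorem \ref{SubDifL01Th1} and then pick $\gamma$ small enough; you compress the paper's four-case definition of $\gamma$ into a single minimum with the empty-set convention, which is cleaner. One small point in your favor: by taking the strict inequality $\gamma<\min\{(u_i^*)^2/(2C):u_i^*>0\}$ you avoid the boundary case $u_i^*=\sqrt{2\gamma C}$, at which the modified operator \eqref{Prox} maps $\eta_i=u_i^*$ to $0$ rather than to $u_i^*$; the paper's choice $\gamma=\min_{i\in I}(u_i^*)^2/(2C)$ actually lands on this boundary for the minimizing index, so your variant is the one that verifies \eqref{Pseq4} without caveat.
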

    \begin{proof}
    Suppose that $(\bc^*,b^*,\bu^*)$ with the  Lagrangian multiplier $\boldsymbol{\lambda}^*\in\bR^m$ is a proximal stationary  point of  \eqref{SVM01eq2}, then $(\bc^*,b^*,\bu^*,\boldsymbol{\lambda}^*)$ satisfies
    equations \eqref{Pseq1}-\eqref{Pseq4}. Obviously, by the definition of KKT, it is sufficient to argue that ${\bf 0}\in C\partial \|\bu_+^{*}\|_0+\boldsymbol{\lambda}^{*}$.
    Notice ${\rm Prox}_{\gamma C L_{0/1}}(\bu^*-\gamma \boldsymbol{\lambda}^*)=\bu^*$ and
    \eqref{Prox0}, we have that
     \begin{equation*}
     \bu^*=\arg\min_{\bv\in\bR^m} CL_{0/1}(\bv)+\frac{1}{2\gamma}\|\bv-(\bu^*-\gamma \boldsymbol{\lambda}^*)\|^2.
    \end{equation*}
    From optimal condition \cite[Theorem 10.1]{1998Variational}, it follows that
    \[
    {\bf 0}\in \partial ( CL_{0/1}(\bu^*))+\frac{1}{\gamma}(\bu^*-(\bu^*-\gamma \boldsymbol{\lambda}^*)),
    \]
    which is equivalent with  ${\bf 0}\!\in\! C\partial \|\bu_+^{*}\|_0+\boldsymbol{\lambda}^{*}$.
    The desired result is obtained.

    \medskip

    Conversely, assume that $(\bc^*,b^*,\bu^*)$ with the  Lagrangian multiplier $\boldsymbol{\lambda}^*\in\bR^m$ is a KKT point. Then the equations in \eqref{KKT} holds. Obviously, by the definition of the proximal stationary point, it is sufficient to argue that the equation \eqref{Pseq4} holds.
    Denote $I:=\{i\in \bN_m : u^*_i>0\}$, $\overline{I}=\bN_m\setminus I$ and  $\overline{I}_0:=\{i\in \overline{I} : u^*_i=0\}$.
    Notice ${\bf 0}\in C\partial \|\bu_+^{*}\|_0+\boldsymbol{\lambda}^{*}$.
    From Theorem \ref{SubDifL01Th1}, it holds that
    $$
    \lambda^*_i\left\{\begin{array}{ll}
    \le 0,  &\mbox{ if }i\in  \overline{I}_{0},\\
    0,  &\mbox{ otherwise}.\\
    \end{array}\right.
    $$
    Write $\overline{I}_{0-}:=\{i\in \overline{I}_0 : \lambda^*_i<0\}$.
    Take
    $$
   \gamma\;\left\{\begin{array}{ll}
   =\min\big\{\frac{2C}{\max_{i\in \overline{I}_0}(\lambda^*_i)^2}, \frac{\min_{i\in I}(u^*_i)^2}{2C}\big\},&
   \mbox{ if }\overline{I}_{0-}\ne\emptyset, I\ne\emptyset,\\                                                                           =\frac{\min_{i\in I}(u^*_i)^2}{2C}, &\mbox{ if }\overline{I}_{0-}=\emptyset, I\ne\emptyset, \\
   =\frac{2C}{\max_{i\in \overline{I}_0}(\lambda^*_i)^2}, &\mbox{ if }\overline{I}_{0-}\ne\emptyset,I=\emptyset, \\
   \in (0,+\infty),& \mbox{ otherwise}. \\
   \end{array}\right.
   $$
   Since
   \begin{align*}
    (\bu^*-\gamma \boldsymbol{\lambda}^*)_i
    =\left\{
    \begin{array}{cl}
      -\gamma \lambda^*_i,   &  {\rm if\;} i\in \overline{I}_0, \\
      u^*_i,   & {\rm otherwise},
    \end{array}
    \right.
    \end{align*}
    and \eqref{Prox0}, we know that ${\rm Prox}_{\gamma C L_{0/1}}(\bu^*-\gamma \boldsymbol{\lambda}^*) =\bu^*$. Hence, $(\bc^*,b^*,\bu^*)$ is a proximal stationary point.
    \end{proof}

    The following theorem reveals the equivalent relationship between the local minimizer and a proximal stationary point of $\ell_0$-KSVM \eqref{SVM01eq2}.  Its proof is shown in Appendix \ref{AppendixB}.
    \begin{theorem}\label{OptiTh1}
    Consider the problem \eqref{SVM01eq2}. Then $(\bc^*,b^*,\bu^*)$ is  a proximal stationary point if and only if it is also a locally optimal solution.
    \end{theorem}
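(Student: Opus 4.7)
My plan is to exploit Theorem \ref{KKTTheorm} and reduce the equivalence to the statement ``$(\bc^*,b^*,\bu^*)$ is a KKT point if and only if it is a local minimizer of \eqref{SVM01eq2}.'' Throughout, I will use the index partition $I:=\{i:u_i^*>0\}$, $\overline{I}_0:=\{i:u_i^*=0\}$, $\overline{I}_{<}:=\{i:u_i^*<0\}$, together with the fact that a KKT multiplier $\boldsymbol{\lambda}^*$ must satisfy $\lambda_i^*=0$ on $I\cup\overline{I}_{<}$ and $\lambda_i^*\le 0$ on $\overline{I}_0$ by Theorem \ref{SubDifL01Th1}.

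For the direction ``proximal stationary $\Rightarrow$ local optimal,'' I would convert to a KKT point via Theorem \ref{KKTTheorm} and then compare values. Let $(\bc,b,\bu)$ be feasible for \eqref{SVM01eq2} and close to $(\bc^*,b^*,\bu^*)$. Subtracting the two equality constraints gives $\Delta\bu=-\diag(\by)\bK\Delta\bc-\Delta b\,\by$. Using \eqref{Pseq1} to write $\bK\bc^*=-\bK\diag(\by)\boldsymbol{\lambda}^*$ and \eqref{Pseq2} to kill the $\Delta b$ contribution, I obtain the identity
\[
\Delta F \;=\; \tfrac{1}{2}\Delta\bc^{\top}\bK\Delta\bc \;+\; \sum_{i\in\overline{I}_0}\lambda_i^* u_i \;+\; C\big|\{i\in\overline{I}_0: u_i>0\}\big|,
\]
after noting that for $\delta$ small enough the loss $L_{0/1}$ is constant on $I$ and $\overline{I}_{<}$. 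The first term is nonnegative since $\bK\succeq 0$. The sum over $\{i\in\overline{I}_0:u_i\le 0\}$ is nonnegative because $\lambda_i^*\le 0$ there. For each $i\in\overline{I}_0$ with $0<u_i<\delta$, I use $\lambda_i^* u_i+C\ge C-|\lambda_i^*|\delta$, so choosing $\delta\le C/\max_{i\in\overline{I}_0}|\lambda_i^*|$ (or any $\delta>0$ if that max is zero) makes each such summand nonnegative. Combining these gives $\Delta F\ge 0$ on a neighborhood, proving local optimality.

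For the converse ``local optimal $\Rightarrow$ proximal stationary,'' I would eliminate $\bu$ via the equality constraint and reduce \eqref{SVM01eq2} to the unconstrained problem $\min_{\bc,b}\tilde F(\bc,b)$ with $\tilde F(\bc,b):=\tfrac12\bc^{\top}\bK\bc+\varphi(\bc,b)$, where $\varphi$ is the composite \eqref{varphiDef} with $g(\bc,b)={\bf 1}-\diag(\by)\bK\bc-b\by$. Since $(\bc^*,b^*,\bu^*)$ is locally optimal for \eqref{SVM01eq2}, $(\bc^*,b^*)$ is locally optimal for $\tilde F$. Applying Fermat's rule and the sum rule for limiting subdifferentials (valid because $\tfrac12\bc^{\top}\bK\bc$ is smooth), I get
\[
\mathbf{0}\in \bigl(\bK\bc^*,\,0\bigr) + \partial\varphi(\bc^*,b^*).
\]
Then Theorem \ref{SubDifL01Th2} produces a multiplier $\boldsymbol{\lambda}^*$ with the required sign structure such that
\(
\bK\bc^* + \bK\diag(\by)\boldsymbol{\lambda}^* = \mathbf{0}
\)
and $\langle \by,\boldsymbol{\lambda}^*\rangle=0$. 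The sign conditions on $\boldsymbol{\lambda}^*$, together with Theorem \ref{SubDifL01Th1}, yield $\mathbf{0}\in C\partial\|\bu_+^*\|_0+\boldsymbol{\lambda}^*$, so all four KKT conditions \eqref{KKT} hold; Theorem \ref{KKTTheorm} then delivers the proximal stationarity.

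The main obstacle is the forward direction, specifically the bookkeeping on $\overline{I}_0$ where $L_{0/1}$ can jump by $1$ under arbitrarily small perturbations; the trick is that the KKT multipliers manufactured by stationarity are precisely the prices that pay for these jumps, so $\lambda_i^*u_i+C\ge 0$ once the neighborhood is small enough. A secondary subtlety is justifying the sum rule for $\partial\tilde F$ in the reverse direction, which is standard because $\tfrac12\bc^{\top}\bK\bc$ is $C^\infty$, so the limiting subdifferential calculus from \cite{1998Variational} applies without qualification.
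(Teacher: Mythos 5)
Your proof is correct, and its overall strategy coincides with the paper's: the reverse direction is identical (eliminate $\bu$, apply Fermat's rule with the smooth-plus-nonsmooth sum rule, invoke Theorem \ref{SubDifL01Th2} to extract the signed multiplier, and land on proximal stationarity — you route the last step through Theorem \ref{KKTTheorm}, which is legitimate since its proof already constructs the needed $\gamma$, whereas the paper reconstructs $\gamma$ explicitly). The only genuine difference is in the forward direction's bookkeeping. The paper splits the neighborhood into $\Omega_1=\{u_i\le 0 \ \forall i\in\Gamma_*\}$ and its complement, handling the first case by the sign argument $\sum_{i\in\Gamma_*}\lambda_i^*u_i\ge 0$ and the second by a local Lipschitz bound $|\bc^{\top}\bK\bc-(\bc^*)^{\top}\bK\bc^*|\le 2C$ paired with the integer jump $\|\bu_+\|_0\ge\|\bu_+^*\|_0+1$. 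You instead keep the exact identity $\Delta F=\tfrac12\Delta\bc^{\top}\bK\Delta\bc+\sum_{i\in\overline I_0}\lambda_i^*u_i+C\,|\{i\in\overline I_0:u_i>0\}|$ and estimate index by index, using $|u_i|<\delta$ on $\overline I_0$ and $\delta\le C/\max_i|\lambda_i^*|$ to make each summand $\lambda_i^*u_i+C$ nonnegative. Your version avoids the case split and gives an explicit neighborhood radius; the paper's version avoids having to track which coordinates jumped. Both are sound, and your per-index pairing makes the mechanism (each $+C$ jump in the loss paying for the worst-case multiplier term) more transparent.
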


    \section{ADMM algorithm}

     In this section, we shall develop the ADMM algorithm for $\ell_0$-KSVM \eqref{SVM01eq2}.
     To start with,  we discuss support vectors which are characterized by the $L_{0/1}$ proximal operator. Then, we build the ADMM algorithm for \eqref{SVM01eq2}. After that, we provide the convergence analysis for the proposed ADMM algorithm.

    \subsection{Support vectors}

    Let $(\bc^*,b,\bu^*)$ be a proximal stationary point of $\ell_0$-KSVM \eqref{SVM01eq2}.  Then from Definition \eqref{PstationPointDef}, there is a Lagrangian multiplier
    $\boldsymbol{\lambda}^*\!\in\! \bR^{m}$ and a constant $\gamma\!>\!0$ such that \eqref{Pseq1}-\eqref{Pseq4} hold. Denote an index set
   \begin{align}\label{SVdefGamma}
   \Gamma_*&:=\{i\in \bN_m:\,
   u^*_i-\gamma\lambda^*_i\in (0,\sqrt{2\gamma C}\,]\},
    \end{align}
   and
    $\overline{\Gamma}_*\!=\!\bN_m\!\setminus\! \Gamma_*$.
   Combining \eqref{Pseq4} and \eqref{Prox}, we know that
   \begin{align*}
       \bu^*&\overset{\eqref{Pseq4}}{=}{\rm Prox}_{\gamma C L_{0/1}}(\bu^*-\gamma     \boldsymbol{\lambda}^*)\\
       &=\begin{bmatrix}
       ({\rm Prox}_{\gamma C L_{0/1}}(\bu^*-\gamma     \boldsymbol{\lambda}^*))_{\Gamma_*}\\
       ({\rm Prox}_{\gamma C L_{0/1}}(\bu^*-\gamma     \boldsymbol{\lambda}^*))_{\overline{\Gamma}_*}
       \end{bmatrix}\\
       &\overset{\eqref{Prox}}{=}\begin{bmatrix}
       {\bf 0}_{\Gamma_*}\\
       (\bu^*-\gamma\boldsymbol{\lambda}^*)_{\overline{\Gamma}_*}
       \end{bmatrix}
   \end{align*}
   which is equivalent to
   \begin{align}\label{SVTemp1}
      \begin{bmatrix}
         \bu^*_{{\Gamma}_*}\\
         \boldsymbol{\lambda}^*_{\overline{\Gamma}_*}
      \end{bmatrix}={\bf 0}.
   \end{align}
   Then $\Gamma_*$ in \eqref{SVdefGamma} is equivalent to
   \begin{align*}
   \Gamma_*&=\{i\in \bN_m:\,
    \lambda^*_i\in [-\sqrt{2C/\gamma},0)\}.
    \end{align*}
    This and \eqref{SVTemp1} result in
   \begin{align}\label{SVTemp2}
       \lambda^*_i\left\{\begin{array}{ll}
       \in [-\sqrt{2C/\gamma},0), & {\rm if\; } i\in \Gamma_*,\\
       =0, & {\rm if\;} i\in \overline{\Gamma}_*.
       \end{array}
       \right.
   \end{align}
   Taking  \eqref{SVTemp2} and \eqref{WReTh} into \eqref{Pseq1} derives
   \begin{align*}
   \begin{array}{ll}
      & \begin{pmatrix}
          \langle \phi(\bx_1),\bw^*\rangle_{\cF}\\
          \langle \phi(\bx_2),\bw^*\rangle_{\cF}\\
          \vdots\\
          \langle \phi(\bx_m),\bw^*\rangle_{\cF}
       \end{pmatrix}
       =\bK \bc^*
       =-\bK{\rm diag}(\by)\boldsymbol{\lambda^*} \\
       &\qquad\qquad=-\begin{pmatrix}
          \langle \phi(\bx_1),\sum\limits_{i\in \Gamma_*}y_i\lambda^*_i\phi(\bx_i)\rangle_{\cF}\\
          \langle \phi(\bx_2),\sum\limits_{i\in \Gamma_*}y_i\lambda^*_i\phi(\bx_i)\rangle_{\cF}\\
          \vdots\\
          \langle \phi(\bx_m),\sum\limits_{i\in \Gamma_*}y_i\lambda^*_i\phi(\bx_i)\rangle_{\cF}
       \end{pmatrix}
   \end{array}
   \end{align*}
   which implies that if $[\phi(\bx_1),\phi(\bx_2),\cdots,\phi(\bx_m)]$ has full column rank (or equivalently $K$ is strictly positive definite), then it holds that
   \begin{equation}\label{wsv}
   \bw^*=-\sum\limits_{i=1}^m y_i\lambda^*_i\phi(\bx_i)=\sum\limits_{i\in \Gamma_*}y_i(-\lambda^*_i)\phi(\bx_i).
   \end{equation}

   \begin{remark}\label{remarksv}
     \begin{itemize}
      \item [\bf (i) ] By Equation \ref{wsv}, we call that any $\bx_i, i\in \Gamma_*$ is a support vector.

      \item [\bf(ii)] By \eqref{SVTemp1},
      \eqref{Pseq3} implies that
      \[
      {\bf 1}\!=\!\bu^*_{\Gamma_*}\!+({\rm diag}(\by)\bK \bc^*+b^*\by)_{\Gamma_*}
      \!=\!({\rm diag}(\by)\bK \bc^*+b^*\by)_{\Gamma_*}
      \]
      which and \eqref{WReTh} yield
      \[
      \langle \bw^*,\phi(\bx_i)\rangle_{\cF}+b^*=\pm{1}, \mbox{ for any }i\in \Gamma_*.
      \]
      In other words, any support vector $\bx_i$ of $\ell_0$-KSVM must fall into the canonical hyperplanes $\langle \bw^*,\phi(\bx_i)\rangle_{\cF}+b^*=\pm{1}$.
       \end{itemize}
     \end{remark}

    This phenomenon shows that the $\ell_0$-KSVM could render fewer support vectors than the other soft-margin KSVM models, which is also verified by our numerical experiments. For example, see Figures \ref{Fig1} a)-c) for Double Circles in Section 5.

    \subsection{ADMM for \texorpdfstring{$\ell_0$}{}-KSVM}

    We shall apply ADMM \cite{boyd2011distributed} to $\ell_0$-KSVM.
    The augmented Lagrangian function of the problem \eqref{SVM01eq2} is given by
     \begin{equation}\label{Lagrangian1}
     \begin{array}{ll}
     L_{\sigma}(\bc,b,\bu;\boldsymbol{\lambda}):
     &=\frac{1}{2}\bc^{\top}\bK\bc+C\|\bu_+\|_0  \\
     &+\langle \boldsymbol{\lambda},  \bu  +{\rm diag}(\by)\bK \bc+b\by -{\bf 1}\rangle\\
     &+\frac{\sigma}{2}\| \bu+{\rm diag}(\by)\bK \bc+b\by-{\bf 1}\|^2
     \end{array}
     \end{equation}
     where $\boldsymbol{\lambda}$ is the Lagrangian multiplier and
     $\sigma\!>\!0$ is the penalty parameter.
     The above function \eqref{Lagrangian1} is reformulated as

    \begin{equation}
    \begin{array}{ll}\label{Lagrangian2}
    &L_{\sigma}(\bc,b,\bu;\boldsymbol{\lambda})
    =\frac{1}{2}\bc^{\top}\bK\bc+C\|\bu_+\|_0 \\
    &\qquad+\frac{\sigma}{2}\|\bu -\big({\bf 1}-{\rm diag}(\by)\bK \bc-b\by-\frac{\boldsymbol{\lambda}}{\sigma}\big)\|^2-\frac{\|\boldsymbol{\lambda}\|^2}{2\sigma}.
    \end{array}
    \end{equation}
    Therefore, the iterative formulas at the $k$ iteration are given as follows:
    \begin{subnumcases}{}
    \bu^{k+1}&:=$\arg\min_{\bu\in\bR^m}L_{\sigma}(\bc^k,b^k,\bu;\boldsymbol{\lambda}^k)$ \label{ADMMa}\\
    \bc^{k+1}&:=$\arg\min_{\bc\in\bR^m}L_{\sigma}(\bc,b^k,\bu^{k+1};\boldsymbol{\lambda}^k)$   \label{ADMMb}\\
     b^{k+1}&:=$\arg\min_{b\in\bR}L_{\sigma}(\bc^{k+1},b,\bu^{k+1};\boldsymbol{\lambda}^k)$   \label{ADMMc}\\
    \boldsymbol{\lambda}^{k+1}&:={\scriptsize$\boldsymbol{\lambda}^k\!+\!\iota\sigma(\bu^{k+1}\!+\!{\rm diag}(\by)\bK \bc^{k+1}+b^{k+1}\by \!-\!{\bf 1})$}   \label{ADMMd}
    \end{subnumcases}
   where $\iota>0$ is the dual step-size.

   To be precise, we should calculate each sub-problem in \eqref{ADMMa}-\eqref{ADMMd} as follows.
   Denote
   \[
   \boldsymbol{\eta^{k}}:={\bf 1}-{\rm diag}(\by)\bK\bc^{k}-b^k\by-\frac{\boldsymbol{\lambda}^k}{\sigma}
   \]
    and define index sets $\Gamma_k$ at the $k$th step by
    \begin{align}\label{SVdefGammaK}
   \Gamma_k&=\Big\{i\in \bN_m:\,
    \eta^k_i\in (0,\sqrt{\frac{2C}{\sigma}}\;]\Big\}
    \end{align}
    and $\overline{\Gamma}_k=\bN_m\!\setminus\! \Gamma_k$.

   (i) Updating $\bu^{k+1}$: By \eqref{Prox0}, \eqref{Prox} and \eqref{Lagrangian2}, we have
   \begin{equation}\label{ADMMeq1}
   \bu^{k+1}
   =\text{Prox}_{\frac{C}{\sigma}\|(\cdot)_+\|_0}(\boldsymbol{\eta}^k).
   \end{equation}
   Together with \eqref{Prox} and \eqref{SVdefGammaK}, we conclude that
    \begin{equation}\label{ADMMuk}
   \bu^{k+1}_{\Gamma_k}={\bf 0},\qquad \bu^{k+1}_{\overline{\Gamma}_k}=\boldsymbol{\eta^k}_{\overline{\Gamma}_k}.
   \end{equation}

    (ii) Updating $\bc^{k+1}$: By \eqref{Lagrangian2}, we see that $L_{\sigma}(\bc,b^k,\bu^{k+1};\boldsymbol{\lambda}^k)$ is a convex quadratic function. Denote  $\boldsymbol{\xi^{k}}:={\bf 1}-\bu^{k+1}-b^k\by-\frac{\boldsymbol{\lambda}^k}{\sigma}$,  we compute
    $$
    \begin{array}{ll}
    &\displaystyle{\nabla_{\bc} L_{\sigma}(\bc,b^k,\bu^{k+1};\boldsymbol{\lambda}^k)}\\
    &\quad\displaystyle{=\nabla_{\bc}\Big(\frac{1}{2}\bc^{\top}\bK\bc+\frac{\sigma}{2}\|\diag(\by)\bK \bc-\boldsymbol{\xi}^k\|^2\Big)  }\\
    &\quad\displaystyle{=\big[\bK+\sigma \bK\bK\big]\bc-\sigma\bK\diag(\by)\boldsymbol{\xi}^k,}
    \end{array}
    $$
    where we have used two facts that $\diag(\by)\diag(\by)={\mathbb I}_m$ with ${\mathbb I}_m$ being an $m\times m$ identity matrix, and $\bK^{\top}=\bK$ in the second equality.
    Then $\bc^{k+1}$ can be obtained by solving the following linear equation:
    \begin{equation}\label{ADMMeq2}
    \big[\bK+\sigma \bK\bK\big]\bc^{k+1}=\sigma\bK\diag(\by)\boldsymbol{\xi}^k.
   \end{equation}
   In particular, if $K$ is further a strictly positive definite kernel, then \eqref{ADMMeq2} reduces to
   $$
   \big[\frac{1}{\sigma} {\mathbb I}_m+ \bK\big] \bc^{k+1}=\diag(\by)\boldsymbol{\xi}^k.
   $$

  (iii) Updating $b^{k+1}$: Observe that
  $$
  \begin{array}{ll}
  b^{k+1}
  &\displaystyle{\!=\!\arg\min_{b\in\bR}\|\bu^{k+1}\!-\!\big({\bf 1}\!-\!{\rm diag}(\by)\bK \bc^{k+1}\!-\! b\by\!-\!\frac{\boldsymbol{\lambda}^{k+1}}{\sigma}\big)\|^2 }\\
  &\displaystyle{\!=\!\arg\min_{b\in\bR}\|b\by\!-\!\big({\bf 1}\!-\!\bu^{k+1}\!-\!{\rm diag}(\by)\bK \bc^{k+1}\!-\!\frac{\boldsymbol{\lambda}^{k+1}}{\sigma}\big)\|^2.}
  \end{array}
  $$
  It follows that
   \begin{equation}\label{ADMMeq3}
   b^{k+1}=\frac{\by^{\top}\br^k}{\by^{\top}\by}=\frac{\by^{\top}\br^k}{m},
   \end{equation}
  where $\br^k:={\bf 1}-\bu^{k+1}-{\rm diag}(\by)\bK \bc^{k+1}-\frac{\boldsymbol{\lambda}^{k+1}}{\sigma}$.

   (iv) Updating $\boldsymbol{\lambda}^{k+1}$: We update $\boldsymbol{\lambda}^{k+1}$
   in  \eqref{ADMMd} as follows
   \begin{equation}\label{ADMMeq4}
     \boldsymbol{\lambda}^{k+1}_{\Gamma_k}=\boldsymbol{\lambda}^{k}_{\Gamma_k}+\iota\sigma\boldsymbol{\omega}^{k+1}_{\Gamma_k},
     \boldsymbol{\lambda}^{k+1}_{\overline{\Gamma}_k}=0,
   \end{equation}
   where $\boldsymbol{\omega}^{k+1}:=\bu^{k+1}+{\rm diag}(\by)\bK \bc^{k+1}+b^{k+1}\by -{\bf 1}$. And, we set $\boldsymbol{\lambda}^{k+1}_{\overline{\Gamma}_k}=0$ in \eqref{ADMMeq4} motivated by \eqref{SVTemp1}.

   To sum up, the ADMM algorithm for $\ell_0$-KSVM \eqref{SVM01eq2} is stated in Algorithm \ref{algo:KSVM}.
  \begin{algorithm}
  \caption{ADMM for $\ell_0$-KSVM}
  \label{algo:KSVM}
  \begin{enumerate}
  \item Initialize $(\bc^0,b^0,\bu^0,\boldsymbol{\lambda}^0)$. Set $C,\sigma,\iota>0$. Choose a positive definite kernel $K$ on $\bR^d$ and the maximum iteration $N$. If the tolerance is satisfied or $k\le N$, do
  \item Update $\bu^{k+1}$ as in \eqref{ADMMuk};
  \item  Update $\bc^{k+1}$ as in \eqref{ADMMeq2};
  \item Update $b^{k+1}$ as in \eqref{ADMMeq3};
  \item  Update $\boldsymbol{\lambda}^{k+1}$ as in \eqref{ADMMeq4};
  \item Set $k=k+1$
  \item Return the final solution $(\bc^k,b^k,\bu^k,\boldsymbol{\lambda}^k)$ to \eqref{SVM01eq2}.
  \end{enumerate}
  \end{algorithm}

  If $K$ is a strictly positive definite kernel on $\bR^d$, then by \eqref{WReTh} and \eqref{Pseq1}, we have the $\ell_0$-KSVM hyperplane
  \begin{equation*}
  h(\bx)=\langle \bw^*,\phi(\bx)\rangle_{\cF}+b^*=-\sum_{i\in\Gamma_*}y_i\lambda_i^*K(\bx_i,\bx)+b^*.
  \end{equation*}

  \subsection{Convergence Analysis}
  In this subsection, we will prove that if the sequence generated by Algorithm \ref{algo:KSVM} has a limit point, then it must be a proximal stationary
  point for $\ell_0$-KSVM \eqref{SVM01eq2}.
  \begin{theorem}\label{ConvThm}
    Let $(\bc^*,b^*,\bu^*,\boldsymbol{\lambda}^*)$ be the limit point of the sequence generated by ADMM
    for $\ell_0$-KSVM \eqref{SVM01eq2}.
    Then $(\bc^*,b^*,\bu^*)$ is a proximal stationary point with
    $\gamma\!=\!\frac{1}{\sigma}$ and also a locally optimal solution to the problem \eqref{SVM01eq2}.
  \end{theorem}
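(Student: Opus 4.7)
The plan is to verify the four defining conditions \eqref{Pseq1}--\eqref{Pseq4} of a proximal stationary point with $\gamma=1/\sigma$ at the limit point, after which Theorem \ref{OptiTh1} immediately upgrades this to local optimality. It will be convenient to write $\boldsymbol{\eta}^k:={\bf 1}-\diag(\by)\bK\bc^k-b^k\by-\boldsymbol{\lambda}^k/\sigma$ and $\boldsymbol{\omega}^{k+1}:=\bu^{k+1}+\diag(\by)\bK\bc^{k+1}+b^{k+1}\by-{\bf 1}$, so that \eqref{Pseq3} is precisely the identity $\boldsymbol{\omega}^*={\bf 0}$.

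The first step is a pigeonhole argument on the active index set. Since $\Gamma_k\subseteq \bN_m$ takes at most $2^m$ values, along any subsequence $\{k_j\}$ realising the limit point $(\bc^*,b^*,\bu^*,\boldsymbol{\lambda}^*)$ one may extract a further subsequence along which $\Gamma_{k_j}\equiv\Gamma$ is constant and the shifted iterates $(\bc^{k_j+1},b^{k_j+1},\bu^{k_j+1},\boldsymbol{\lambda}^{k_j+1})$ converge as well. The dual update \eqref{ADMMeq4} then gives $\|\boldsymbol{\lambda}^{k_j+1}_{\Gamma}-\boldsymbol{\lambda}^{k_j}_{\Gamma}\|=\iota\sigma\|\boldsymbol{\omega}^{k_j+1}_{\Gamma}\|$ together with $\boldsymbol{\lambda}^{k_j+1}_{\overline{\Gamma}}={\bf 0}$; equating the two convergent sequences for $\boldsymbol{\lambda}$ forces $\boldsymbol{\omega}^*_\Gamma={\bf 0}$ and $\boldsymbol{\lambda}^*_{\overline{\Gamma}}={\bf 0}$.

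Next I would pass to the limit in each of the remaining subproblems. The $\bu$-update \eqref{ADMMuk} yields $\bu^*_\Gamma={\bf 0}$ and $\bu^*_{\overline{\Gamma}}=\boldsymbol{\eta}^*_{\overline{\Gamma}}$, which combined with $\boldsymbol{\lambda}^*_{\overline{\Gamma}}={\bf 0}$ forces $\boldsymbol{\omega}^*_{\overline{\Gamma}}={\bf 0}$, completing \eqref{Pseq3}. Substituting this primal feasibility into the limit of \eqref{ADMMeq2} and using $\diag(\by)\diag(\by)=\mathbb{I}_m$, the common term $\sigma\bK\bK\bc^*$ cancels and one recovers $\bK\bc^*+\bK\diag(\by)\boldsymbol{\lambda}^*={\bf 0}$, which is \eqref{Pseq1}. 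Plugging the same feasibility into \eqref{ADMMeq3} cancels $mb^*$ on both sides and isolates $\by^{\top}\boldsymbol{\lambda}^*=0$, which is \eqref{Pseq2}.

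Finally, \eqref{Pseq4} is checked componentwise using the stabilised partition. On $\Gamma$, $\bu^*_i-\gamma\lambda^*_i=\eta^*_i\in[0,\sqrt{2\gamma C}]$ (the closure of the interval defining $\Gamma_k$), and $\bu^*_\Gamma={\bf 0}$, so the modified operator \eqref{Prox} returns $0=\bu^*_i$. On $\overline{\Gamma}$, $(\bu^*-\gamma\boldsymbol{\lambda}^*)_{\overline{\Gamma}}=\bu^*_{\overline{\Gamma}}=\boldsymbol{\eta}^*_{\overline{\Gamma}}$ lies in $(-\infty,0]\cup(\sqrt{2\gamma C},+\infty)$, so \eqref{Prox} returns $\bu^*_i$ itself. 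I expect the \emph{main obstacle} to be the boundary value $\eta^*_i=\sqrt{2\gamma C}$ on $\overline{\Gamma}$: the strict inequality $\eta^{k_j}_i>\sqrt{2C/\sigma}$ only passes to a weak inequality in the limit, and this is the one point at which the half-open convention in \eqref{Prox} would return $0$ rather than $\eta^*_i$. Resolving this cleanly will require either refining the subsequence extraction, perturbing the tie-breaking in \eqref{Prox}, or exploiting monotonicity of the augmented Lagrangian along the iteration to rule out such a degenerate accumulation point. Once \eqref{Pseq1}--\eqref{Pseq4} are all verified, Theorem \ref{OptiTh1} yields local optimality with no further work.
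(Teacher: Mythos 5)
Your proposal follows the paper's own proof essentially step for step: the same pigeonhole extraction of a subsequence on which $\Gamma_k\equiv\Gamma$, the same passage to the limit in the dual update to obtain $\boldsymbol{\omega}^*_{\Gamma}={\bf 0}$ and $\boldsymbol{\lambda}^*_{\overline{\Gamma}}={\bf 0}$, the same use of the $\bu$-update to get $\bu^*_{\Gamma}={\bf 0}$, $\bu^*_{\overline{\Gamma}}=\boldsymbol{\eta}^*_{\overline{\Gamma}}$ and hence full primal feasibility, the same cancellations in the limits of \eqref{ADMMeq2} and \eqref{ADMMeq3} to recover \eqref{Pseq1} and \eqref{Pseq2}, and the same final appeal to Theorem \ref{OptiTh1}. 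The one point where you stop short --- the possibility that for some $i\in\overline{\Gamma}$ the strict inequality $\eta^{k}_i>\sqrt{2C/\sigma}$ degenerates to $\eta^*_i=\sqrt{2\gamma C}$ in the limit, in which case the half-open convention in \eqref{Prox} returns $0$ rather than $\eta^*_i=\bu^*_i$ and \eqref{Pseq4} fails --- is a genuine subtlety, but it is one the paper does not address either: the published proof simply asserts $\bu^*={\rm Prox}_{\gamma CL_{0/1}}(\bu^*-\gamma\boldsymbol{\lambda}^*)$ from \eqref{CovThTemp3}, \eqref{SVdefGammaK} and \eqref{Prox} without excluding this boundary case. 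So you have not missed an idea that the paper supplies; you have flagged a gap that the paper silently passes over (a second shared implicit assumption is that the shifted iterates $(\bc^{k+1},b^{k+1},\bu^{k+1},\boldsymbol{\lambda}^{k+1})$ converge to the \emph{same} limit along the chosen subsequence, which both you and the paper use without justification when equating $\boldsymbol{\lambda}^{k+1}_{\Gamma}$ and $\boldsymbol{\lambda}^{k}_{\Gamma}$).
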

  \begin{proof} Suppose that $(\bc^k,b^k,\bu^k,\boldsymbol{\lambda}^k)$ is generated by Algorithm  \ref{algo:KSVM} and
    $(\bc^*,b^*,\bu^*,\boldsymbol{\lambda}^*)$ is its limits point.  Denote
   \[
   \boldsymbol{\eta^{k}}:={\bf 1}-{\rm diag}(\by)\bK\bc^{k}-b^k\by-\frac{\boldsymbol{\lambda}^k}{\sigma}
   \]
    and define index sets $\Gamma_k$ with respect to $\boldsymbol{\eta^{k}}$ at the $k$th step by
   $$
   \Gamma_k=\Big\{i\in \bN_m:\,\eta^k_i\in (0,\sqrt{\frac{2C}{\sigma}}\;]\Big\}.
   $$
    Notice that $\Gamma_k\subseteq \bN_m$ for all $k\in\bN$. Then there exists a subset $\Gamma\subseteq \bN_m$ and an index  set $J\subseteq \bN$ such that
   the subsequence $\{\Gamma_k\}_{k\in J}$ satisfying
    \[
    \Gamma_k\equiv\Gamma, \mbox{ for any } k\in J.
    \]
    Taking the limit along with $J$ of \eqref{ADMMeq4}, we know that
     \begin{equation}\label{CovThTemp1}
     \boldsymbol{\lambda}^{*}_{\Gamma}=\boldsymbol{\lambda}^{*}_{\Gamma}+\iota\sigma\boldsymbol{\omega^{*}}_{\Gamma},\qquad
     \boldsymbol{\lambda}^{*}_{\overline{\Gamma}}={\bf 0}.
   \end{equation}
   and
   \begin{equation}\label{CovThTemp2}
   \boldsymbol{\omega}^{*}=\bu^{*}+{\rm diag}(\by)\bK \bc^{*}+b^{*}\by -{\bf 1}.
   \end{equation}
   The first equation in \eqref{CovThTemp1} can
   yields that $\boldsymbol{\omega}^{*}_{\Gamma}={\bf 0}$.  Taking the limit along with $J$ of \eqref{ADMMuk} and $\boldsymbol{\eta}^k$
   respectively, we have that
   \begin{align}\label{CovThTemp3}
    \bu^*_{\Gamma}={\bf 0}, \quad \bu^*_{\overline{\Gamma}}=\boldsymbol{\eta}^*_{\overline{\Gamma}}
   \end{align}
   and
   \begin{align}\label{CovThTemp4}
       \boldsymbol{\eta}^*&={\bf 1 }-{\rm diag}(\by)\bK\bc^*-b^*\by-\frac{\boldsymbol{\lambda}^*}{\sigma}\nonumber\\
       &={\bf 1 }-{\rm diag}(\by)\bK\bc^*-b^*\by-\bu^*+\bu^*-\frac{\boldsymbol{\lambda}^*}{\sigma}\nonumber\\
       &\overset{\eqref{CovThTemp2}}{=}-\boldsymbol{\omega}^*+\bu^*-\frac{\boldsymbol{\lambda}^*}{\sigma}
      \end{align}
    which implies that $\boldsymbol{\omega}^*_{\overline{\Gamma}}={\bf 0}$ by the second equation in \eqref{CovThTemp1}. Together with
     $\boldsymbol{\omega}^{*}_{\Gamma}={\bf 0}$, we know that $\boldsymbol{\omega}^*={\bf 0}$ and then from \eqref{CovThTemp3}, it follows
   \begin{equation}\label{CovThTemp5}
   \bu^{*}+{\rm diag}(\by)\bK \bc^{*}+b^{*}\by ={\bf 1}.
   \end{equation}
   Since $\boldsymbol{\omega}^*={\bf 0}$, \eqref{CovThTemp4} yields $\boldsymbol{\eta}^*=\bu^*-\frac{\boldsymbol{\lambda}^*}{\sigma}$. Together this with \eqref{CovThTemp3},\eqref{SVdefGammaK} and \eqref{Prox}, we can conclude that
   \begin{equation}\label{CovThTemp6}
   \bu^{*}= {\rm Prox}_{\frac{C}{\sigma} L_{0/1}}(\boldsymbol{\eta}^*)= {\rm Prox}_{\gamma C L_{0/1}}(\bu^*-\frac{\boldsymbol{\lambda}^*}{\sigma}).
   \end{equation}
   Taking the limit along with $J$ of \eqref{ADMMeq3}, we have that
   \begin{align*}
   b^*&=\frac{\langle \by, {\bf{1}}-\bu^*-{\rm diag}(\by)\bK\bc^*-\frac{\boldsymbol{\lambda}^*}{\sigma}\rangle}{m}\\
   &\overset{\eqref{CovThTemp5}}{=}\frac{\langle \by, b^*\by-\frac{\boldsymbol{\lambda}^*}{\sigma}\rangle}{m},
   \end{align*}
   which implies that
   \begin{align}\label{CovThTemp7}
       \langle \by, \boldsymbol{\lambda}^*\rangle=0.
   \end{align}
   Taking the limit along with $J$ of \eqref{ADMMeq2}, we
   have that
    \begin{align*}
   (\bK+\sigma \bK\bK)\bc^*&=\sigma\bK{\rm diag}(\by)
   ( {\bf{1}}-\bu^*-b^*\by-\frac{\boldsymbol{\lambda}^*}{\sigma})\\
   &\overset{\eqref{CovThTemp5}}{=}\sigma\bK{\rm diag}(\by)( {\rm diag}(\by)\bK\bc^*-\frac{\boldsymbol{\lambda}^*}{\sigma}),
   \end{align*}
   which implies that
   \begin{align}\label{CovThTemp8}
      \bK\bc^*+\bK{\rm diag}(\by)\boldsymbol{\lambda}^*={\bf 0}.
   \end{align}
   By the equations \eqref{CovThTemp8}, \eqref{CovThTemp7},  \eqref{CovThTemp5} and \eqref{CovThTemp6}, we  know that
   $(\bc^*,b^*,\bu^*)$ is a proximal stationary point with
    $\gamma=\frac{1}{\sigma}$ and also a locally optimal solution to the problem \eqref{SVM01eq2} by Theorem \ref{OptiTh1}.
  \end{proof}

  \section{Numerical Experiments}
  In this section, we will show the sparsity, robustness and effectiveness of the
  proposed \textsf{$\ell_0$-KSVM} by using Jupyter Notebook on a laptop of 16GB of memory and Inter Core i7 2.30 Ghz CPU.

  Numerous variants of SVM have been developed for binary classification problems.
  However, we do not tend to compare the $\ell_0$-KSVM with all of the existing ones. We mainly consider the KSVM with hinge loss and squared hinge loss \eqref{SHSVM} as the baseline since it is the default kernel SVM algorithm in the sklearn package.

   Similar to Theorem \ref{RepTh}, the solution $\bw\!\in\!\cF$ to KSVM with hinge loss or squared hinge loss \eqref{SHSVM} has the form $\bw=\sum_{i=1}^m c_i\phi(\bx_i)$ for some constants $c_i$, $i=1,2,\dots,m$. Then, \eqref{SHSVM} can be rewritten in the matrix form as follows:
  \begin{equation}\label{ell2SVM}
   \min_{\bc\in\bR^m, b\in \bR}\frac{1}{2}\bc^{\top}\bK\bc+C\|({\bf 1}-\diag(\by)\bK\bc-b\by)_+\|.
   \end{equation}
  For this reason, we call \eqref{ell2SVM} the \textsf{$\ell_1$-KSVM}. Similarly, the $\textsf{$\ell_2$-KSVM}$ can be expressed as follows: 
  \begin{equation}\label{ell1SVM}
   \min_{\bc\in\bR^m, b\in \bR}\frac{1}{2}\bc^{\top}\bK\bc+C\|({\bf 1}-\diag(\by)\bK\bc-b\by)_+\|^2_2.
   \end{equation}
   
  The following experiments are not to claim that $\ell_0$-KSVM is better than $\ell_1$-KSVM or $\ell_2$-KSVM, which is surely not true since the numerical performance is problem-dependent. We will see that $\ell_0$-KSVM can achieve comparable accuracy compared with $\ell_1$-KSVM and $\ell_2$-KSVM while the former generally has fewer support vectors.

  According to Theorem \ref{OptiTh1} and Definition \ref{PstationPointDef}, the proximal stationary point can be taken as a stopping criteria in our experiments. Specifically, the proposed Algorithm \ref{algo:KSVM} can be terminated if the solution $(\bc^k,b^k,\bu^k,\boldsymbol{\lambda}^k)$ at the $k$-th iteration satisfies
  \begin{equation}\label{Proximalcriteria}
  \max\{\beta_1,\beta_2,\beta_3,\beta_4\}<\varepsilon
  \end{equation}
  where $\varepsilon$ is a given tolerance level (by default $\varepsilon=10^{-3}$) and
  $$
  \begin{array}{ll}
  \beta_1&:= \frac{\|\bc^k+\diag(\by)\boldsymbol{\lambda}^k\|}{1+\|\bc^k\|+\|\boldsymbol{\lambda}^k\|}\\
  \beta_2&:=\frac{\langle \by,\boldsymbol{\lambda}^k\rangle}{m}\\
  \beta_3&:=\frac{\|\bu^k+\diag(\by)\bK\bc^k+b^k\by-{\bf 1}\|}{\sqrt{m}}\\
  \beta_4&:=\frac{\|\bu^k-\text{Prox}_{\frac{C}{\sigma}\|(\cdot)_+\|_0}(\bu^k-\boldsymbol{\lambda}^k/\sigma)\|}{1+\|\bu^k\|}.
  \end{array}
  $$
  The accuracy with respect to the data $\{(\bx^t_j,y^t_j):j\in\bN_{m_t}\}$ is defined by
  $$
  {\rm Acc}:=1-\frac{1}{2m_t}\sum_{j=1}^{m_t}|h(\bx^t_j)-y_j^t|
  $$
  where $h$ is the learned $\ell_0$-KSVM, $\ell_1$-KSVM or $\ell_2$-KSVM classifier.

  To evaluate binary classification performance, we take five evaluation criteria into consideration. They are the training accuracy (Train Acc), the testing accuracy (Test Acc), the number of support vectors (NSV), the CPU time (CPU), and the number of iterations (Iter).

  In our experiments, we set $\sigma\in\{1,2\}$, $\iota=1$ and the penalty  parameter
  \(
  C\in\{1/2,1,2,4,8,16,32,64\}.
  \)
  The label noise rate $r\!\in\!\{0\%,1\%,5\%,10\%\}$, that is, $r*2*m$ samples are flipped their labels. The maximum iteration number is set to be $2000$. The Gaussian kernel
  $$
  K(\bz,\bz')=\exp(-\rho\|\bz-\bz'\|^2),\ \bz,\bz'\in\bR^d
  $$
  is chosen as the kernel function for $\ell_0$-KSVM, $\ell_1$-KSVM and $\ell_2$-KSVM, where $\rho$ is chosen to be $1/d$.

  We conduct numerical experiments on both synthetic data and real data. The synthetic datasets used are Double Moons and Double Circles. Real datasets include Breast Cancer, Australian, Diabetes, Heart, Fourclass, German.numer, and svmguide1.
  Double Moons, Double Circles, and Breast Cancer are generated by sklearn.datasets, and the others come from
  \url{https://www.csie.ntu.edu.tw/~cjlin/libsvmtools/datasets/}.  A preprocessing step is required to scale inputs to the standard normal. For any dataset, 60\% of the samples are used for training and the rest used for testing.

  The $\ell_0$-KSVM, $\ell_1$-KSVM and $\ell_2$-KSVM classifiers for Double Circles and Double Moons are depicted in Figure \ref{Fig1} and
  Figure \ref{Fig2}, respectively. The support vectors are marked in a red circle. Remember that Remark \ref{remarksv} tells us that the support vectors must fall into the canonical hyperplanes. This is verified by our numerical experiments, see, Figures \ref{Fig1} a)-c) and  Figures \ref{Fig2} a)-c). Also, we see that $\ell_0$-KSVM is more robust to the label noise.
  The five evaluation results for comparison between
  $\ell_0$-KSVM, $\ell_1$-KSVM and $\ell_2$-KSVM for two synthetic datasets are given in Table \ref{Tab1} and Table \ref{Tab2}, respectively.

  In Table \ref{Tab3}, we present numerical results for seven real datasets. Clearly,
  $\ell_0$-KSVM achieves a comparable accuracy with fewer support vectors compared with $\ell_1$-KSVM and $\ell_2$-KSVM. The computation of $\ell_0$-KSVM is also efficient enough.

\begin{figure*}[!t]
	\centering
	\setcounter {subfigure} {0} a){
		\includegraphics[scale=0.23]{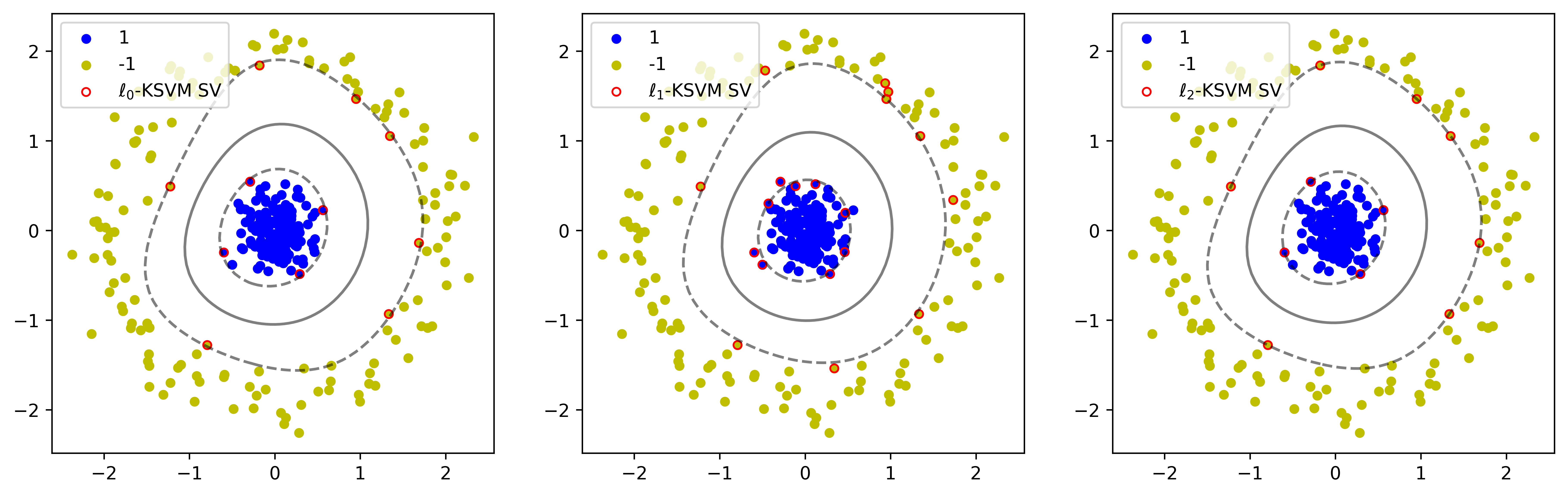}}
	\setcounter {subfigure} {0} b){
		\includegraphics[scale=0.23]{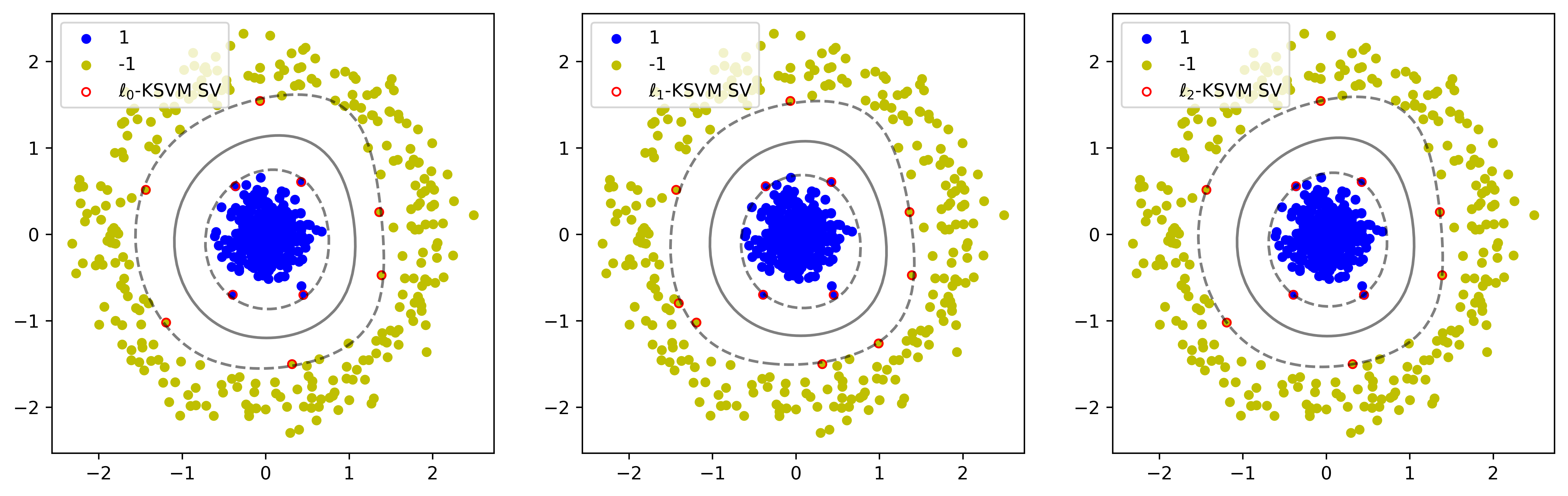}}\\
	\setcounter {subfigure} {0} c){
		\includegraphics[scale=0.23]{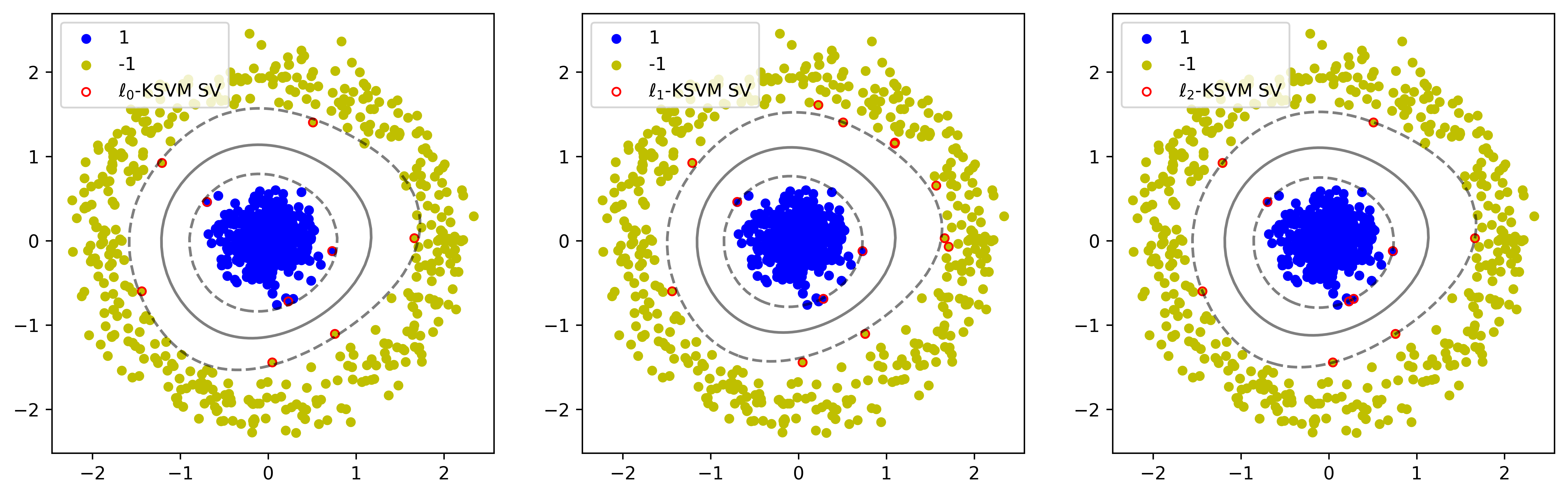}}
   \setcounter {subfigure} {0} d){
		\includegraphics[scale=0.23]{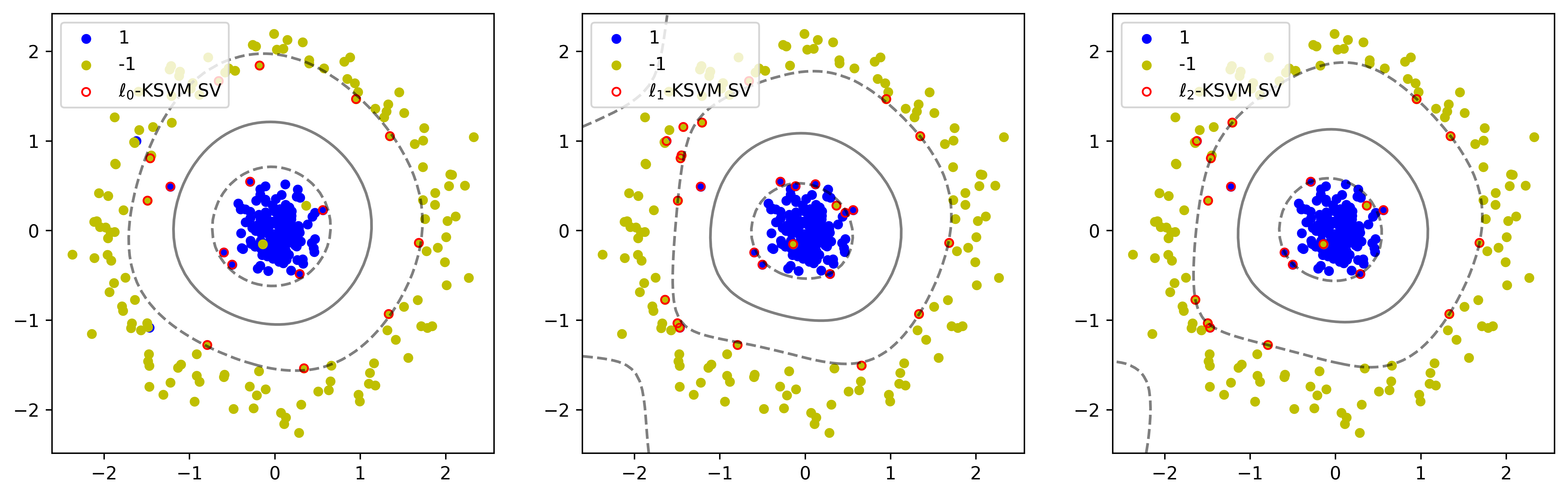}}\\
	\setcounter {subfigure} {0} e){
		\includegraphics[scale=0.23]{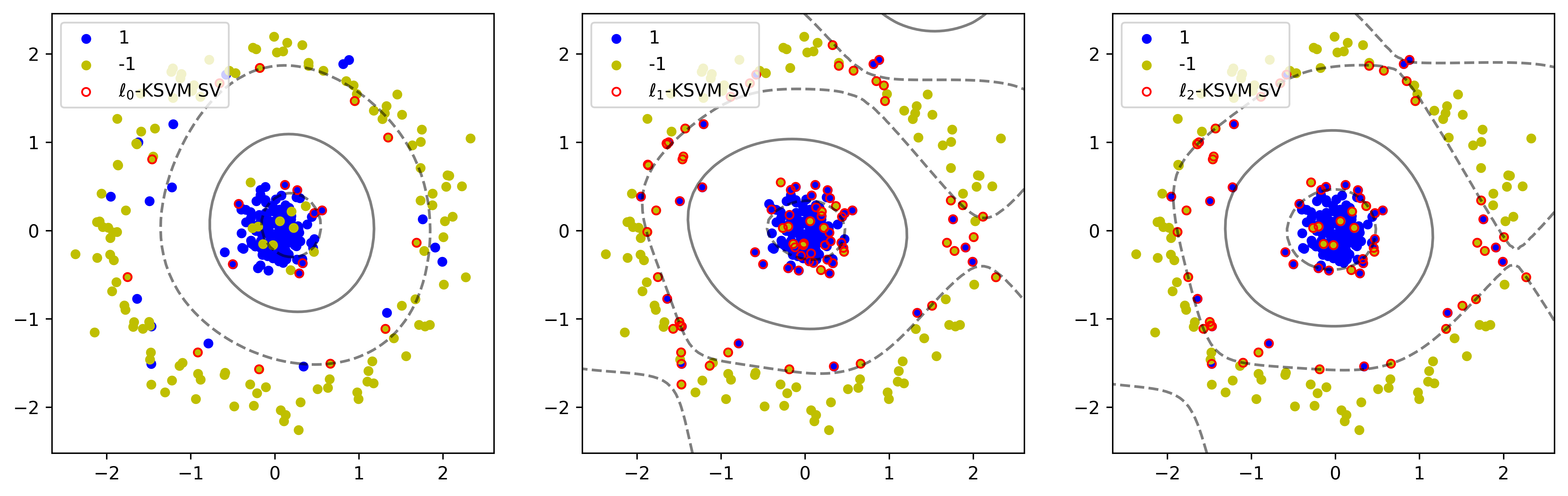}}
   \setcounter {subfigure} {0} f){
		\includegraphics[scale=0.23]{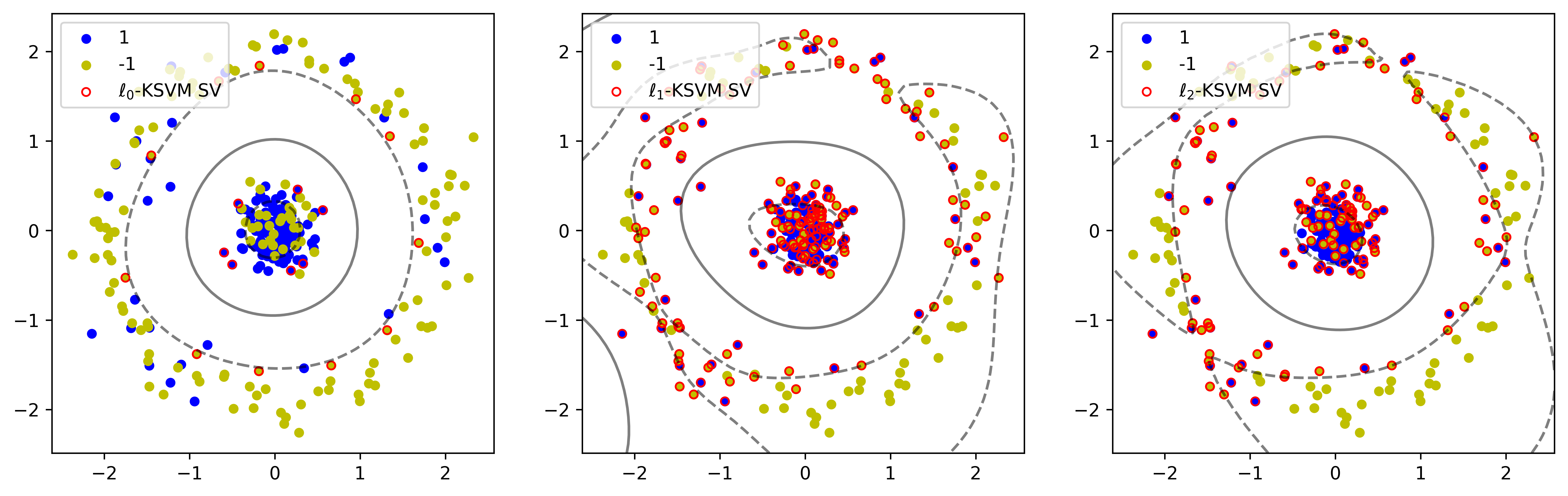} }
	\caption{Double Circles: a) $m=500$ and $r=0\%$; b) $m=1000$ and $r=0\%$; c) $m=1500$ and $r=0\%$; d) $m=500$ and $r=1\%$; e) $m=500$ and $r=5\%$; f) $m=500$ and $r=10\%$.}
	\label{Fig1}
\end{figure*}

\begin{table}[!t]
\renewcommand{\arraystretch}{1.3}
\centering
  \caption{Comparisons between $\ell_0$-KSVM, $\ell_1$-KSVM and $\ell_2$-KSVM for Double Circles}
\begin{tabular}{|c|c|c|c|c|c|c|c|}
  \hline
   $m$&r&Algorithm& Train Acc &  Test Acc & NSV & CUP(s) & Iter \\ \hline
   500&0\% &$\ell_0$-\textsf{KSVM} & 100\% & 100\%  & {\bf 11} & 0.292  & 300\\   
      & &$\ell_1$-\textsf{KSVM}& 100\%  & 100\%  & 19 &  0.001 & 31\\ 
      & &$\ell_2$-\textsf{KSVM}& 100\%  & 100\%  & {\bf 11} &  0.001 & 2\\ \hline
   1000&0\% &$\ell_0$-\textsf{KSVM} & 100\% & 100\%  & {\bf 10} & 5.07  & 600 \\  
     & &$\ell_1$-\textsf{KSVM}& 100\%  & 100\%  & 12 &  0.001 & 43\\ 
      & &$\ell_2$-\textsf{KSVM}& 100\%  & 100\%  & {\bf 10} &  0.001 & 2\\ \hline
   1500&0\% &$\ell_0$-\textsf{KSVM} & 100\% & 100\%  & {\bf 9} & 16.664  & 900 \\  
     & &$\ell_1$-\textsf{KSVM}& 100\%  & 100\%  & 14 &  0.001 & 29\\ 
      & &$\ell_2$-\textsf{KSVM}& 100\%  & 100\%  & 10 &  0.001 & 2\\ \hline
    500&1\% &$\ell_0$-\textsf{KSVM} & 98.33\% & 97.5\%  & {\bf 16} & 2.281  & 2000 \\  
      & &$\ell_1$-\textsf{KSVM}& 98.33\%  & 94.5\%  & 29 &  0.222 & 77\\
      & &$\ell_2$-\textsf{KSVM}& 98.33\%  & 97\%  & 20 &  0.001 & 2\\ \hline
      
    500&5\% &$\ell_0$-\textsf{KSVM} & 90.67\% & 89\%  & {\bf 19} & 2.208  & 2000 \\  
     & &$\ell_1$-\textsf{KSVM}& 91\%  & 88.5\%  & 94 &  0.215 & 100\\
     & &$\ell_2$-\textsf{KSVM}& 90.67\%  & 89\%  & 73 &  0.001 & 2\\ \hline
     
    500&10\% &$\ell_0$-\textsf{KSVM} & 79.33\% & 81\%  & {\bf 18} & 2.381  & 2000 \\  
    & &$\ell_1$-\textsf{KSVM}& 79.67\%  & 81\% & 179  &  0.240 & 100\\
    & &$\ell_2$-\textsf{KSVM}& 79.33\%  & 81\% & 141  &  0.003 & 2\\ \hline
\end{tabular}
  \label{Tab1}
\end{table}

\begin{figure*}[!t]
	\centering
	\setcounter {subfigure} {0} a){
		\includegraphics[scale=0.23]{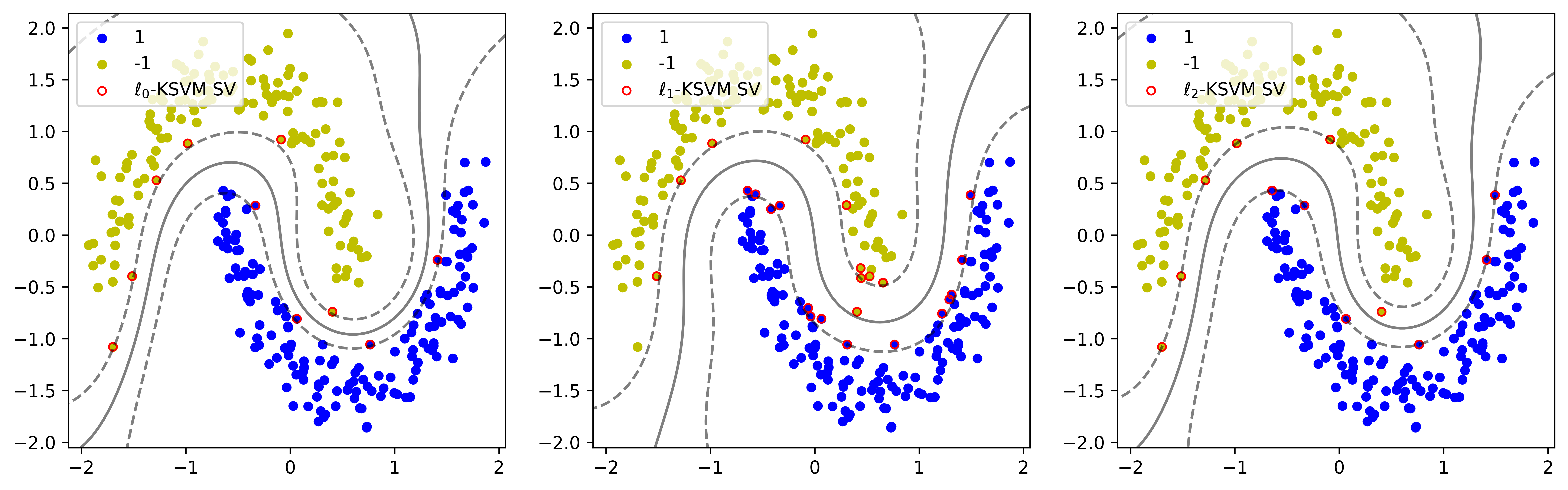}}
	\setcounter {subfigure} {0} b){
		\includegraphics[scale=0.23]{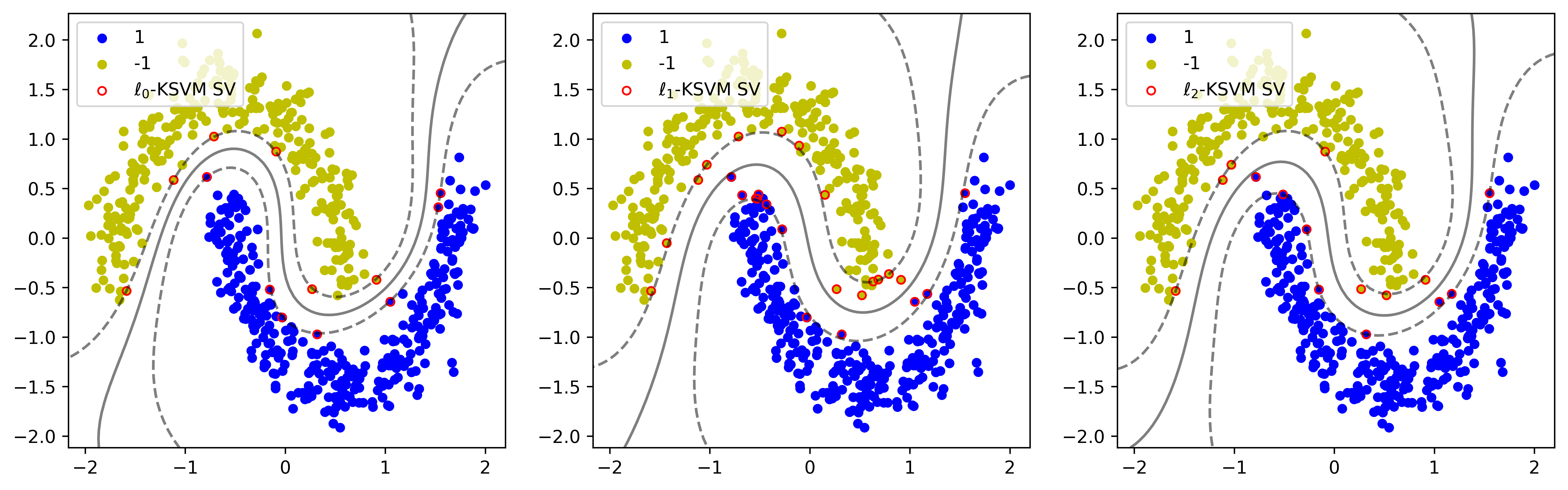}}\\
	\setcounter {subfigure} {0} c){
		\includegraphics[scale=0.23]{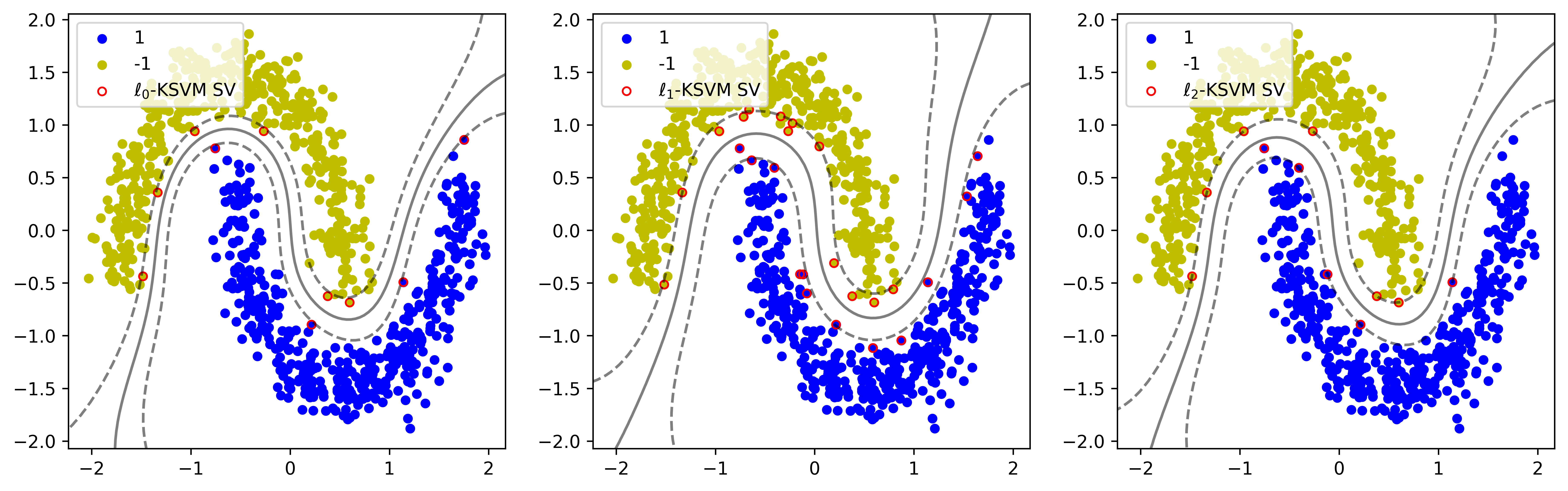}}
   \setcounter {subfigure} {0} d){
		\includegraphics[scale=0.23]{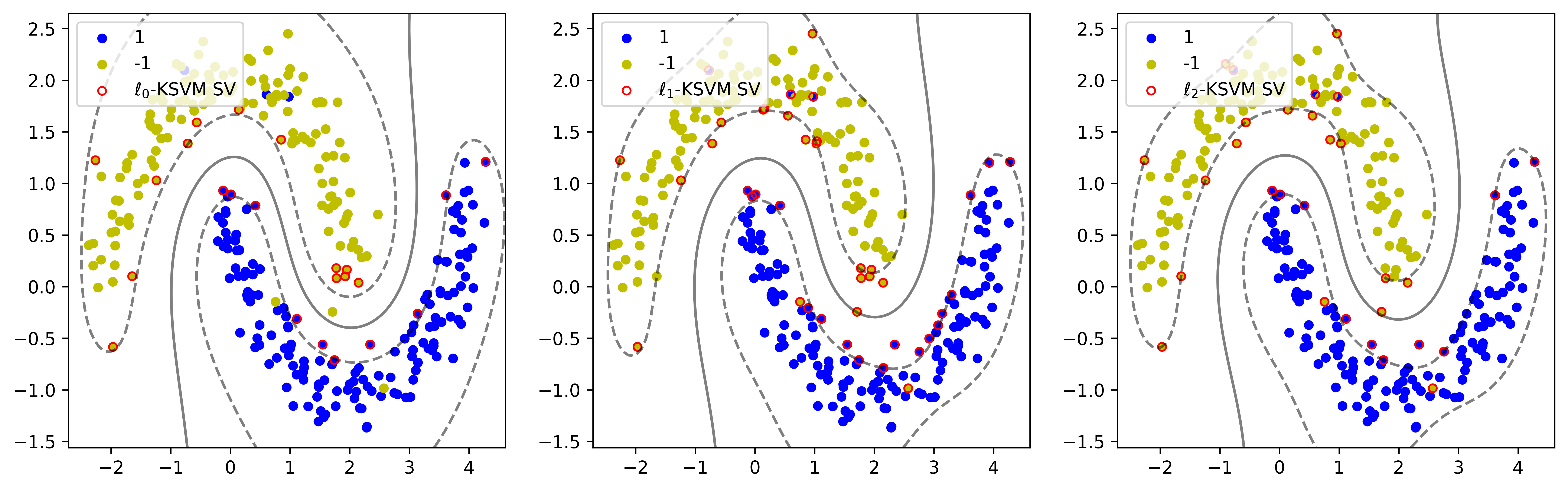} }\\
	\setcounter {subfigure} {0} e){
		\includegraphics[scale=0.23]{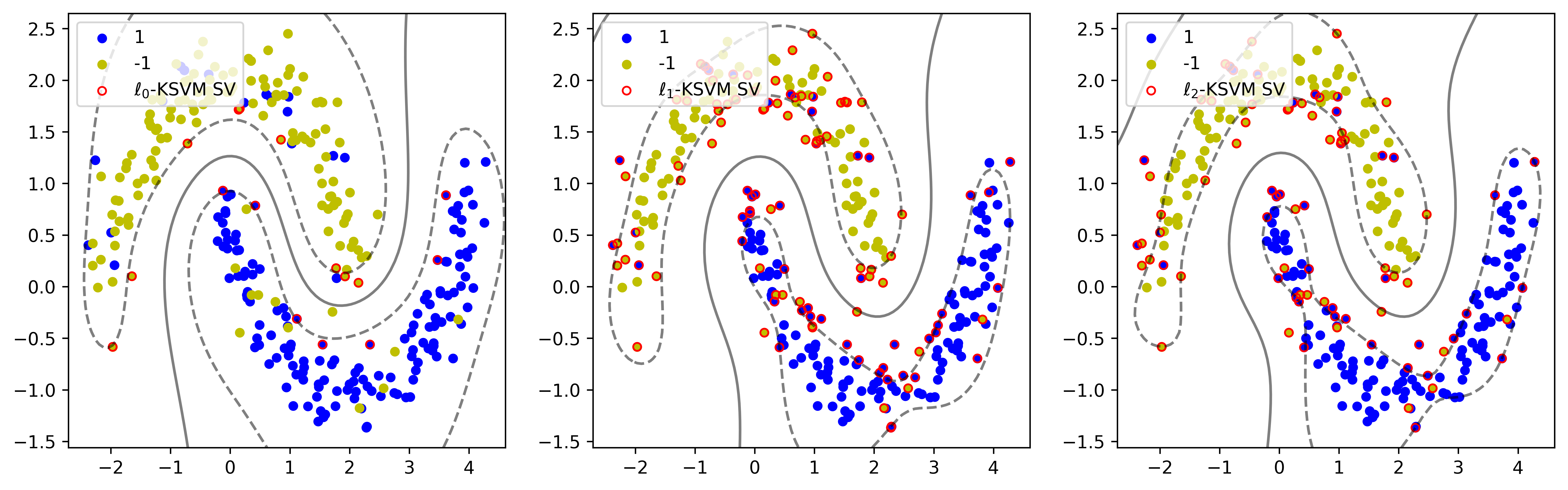}}
   \setcounter {subfigure} {0} f){
		\includegraphics[scale=0.23]{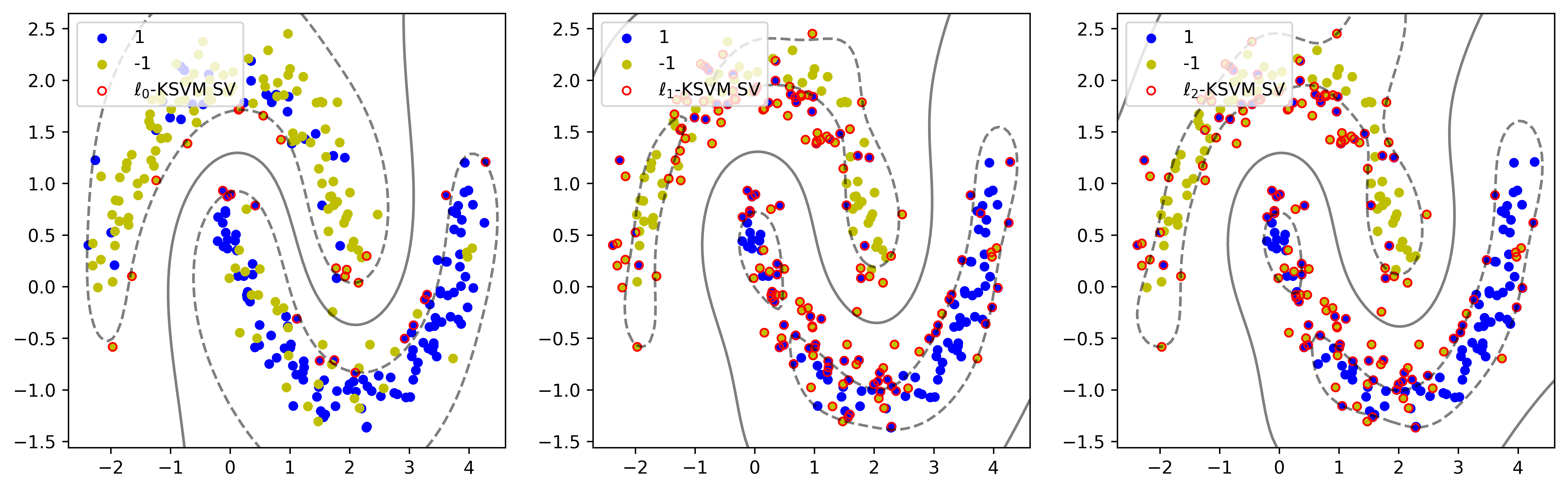} }\\
	\caption{Double Moons: a) $m=500$ and $r=0\%$; b) $m=1000$ and $r=0\%$; c) $m=1500$ and $r=0\%$; d) $m=500$ and $r=1\%$; e) $m=500$ and $r=5\%$; f) $m=500$ and $r=10\%$.}
	\label{Fig2}
\end{figure*}

\begin{table}[!t]
\renewcommand{\arraystretch}{1.3}
\centering
  \caption{Comparisons between $\ell_0$-KSVM, $\ell_1$-KSVM and $\ell_2$-KSVM  for Double Moons}
\begin{tabular}{|c|c|c|c|c|c|c|c|}
  \hline
   $m$&r&Algorithm& Train Acc &  Test Acc & NSV & CUP(s) & Iter \\ \hline
   500&0\% &$\ell_0$-\textsf{KSVM} & 100\% & 99.5\%  & {\bf 10} & 0.394  & 330\\
     & &$\ell_1$-\textsf{KSVM}& 100\%  & 99.5\%  & 24 &  0.219 & 100\\
      & &$\ell_2$-\textsf{KSVM}& 100\%  & 100\%  & 12 &  0.001 & 2\\ \hline
      
   1000&0\% &$\ell_0$-\textsf{KSVM} & 100\% & 99.5\%  & {\bf 13} & 5.405  & 562\\
     & &$\ell_1$-\textsf{KSVM}& 100\%  & 99.25\%  & 26 &  0.462 & 100\\
      & &$\ell_2$-\textsf{KSVM}& 100\%  & 99.75\%  & 15 &  0.004 & 2\\ \hline
      
   1500&0\% &$\ell_0$-\textsf{KSVM} & 100\% & 100\%  & {\bf 10} & 14.845  & 628\\
     & &$\ell_1$-\textsf{KSVM}& 100\%  & 100\%  & 25 &   0.684 & 100\\
      & &$\ell_2$-\textsf{KSVM}& 100\%  & 100\%  & 11 &  0.004 & 2\\ \hline

   500&1\% &$\ell_0$-\textsf{KSVM} & 98.33\% & 97.5\%  & {\bf 23} & 2.171  & 2000\\
     & &$\ell_1$-\textsf{KSVM}& 98.33\%  & 97\%  & 41 &  0.062 & 32\\
      & &$\ell_2$-\textsf{KSVM}& 98.33\%  & 97\%  & 30 &  0.004 & 2\\ \hline

   500&5\% &$\ell_0$-\textsf{KSVM} & 90.33\% & 88.5\%  & {\bf 16} & 2.048  & 2000\\
     & &$\ell_1$-\textsf{KSVM}& 90.33\%  & 88.5\%  & 106 &  0.156 & 63\\
      & &$\ell_2$-\textsf{KSVM}& 90.33\%  & 88.5\%  & 81 &  0.005 & 2\\ \hline

   500&10\% &$\ell_0$-\textsf{KSVM} & 79.33\% & 81\%  & {\bf 27} & 2.119  & 2000\\
     & &$\ell_1$-\textsf{KSVM}& 79.33\%  & 80.5\%  & 166 &  0.219 & 100\\
      & &$\ell_2$-\textsf{KSVM}& 79.33\%  & 81\%  & 148 &  0.015 & 2\\ \hline
\end{tabular}
  \label{Tab2}
\end{table}

\begin{table}[!t]
\renewcommand{\arraystretch}{1.3}
\centering
  \caption{Comparisons between $\ell_0$-KSVM, $\ell_1$-KSVM and $\ell_2$-KSVM for seven real datasets}
\begin{tabular}{|c|c|c|c|c|c|c|c|c|}
  \hline
   Datasets &Algorithm& Train Acc &  Test Acc & NSV & CUP(s) & Iter \\ \hline
   Breast Cancer&$\ell_0$-\textsf{KSVM} & 92.38\% & 97.37\%  & {\bf 79} & 0.116  & 17\\
   ($m\!=\!569,d\!=\!30$)  &$\ell_1$-\textsf{KSVM}& 98.53\%  & {\bf 98.68}\%  & 87 &  0.294 & 100\\
   &$\ell_2$-\textsf{KSVM}& 99.12\%  & 98.25\%  & 83 &  0.007 & 2\\ \hline

   Australian&$\ell_0$-\textsf{KSVM} & 88.41\% &  83.33\% & {\bf 105} &  0.497 &45 \\
   ($m\!=\!690,d\!=\!14$)&$\ell_1$-\textsf{KSVM}& 91.79\%  & 83.33\%  & 184 &  0.250 & 100\\
   & $\ell_2$-\textsf{KSVM}& 91.79\% & {\bf 84.06}\%  & 188 &  0.012 & 2\\ \hline

   Diabetes  &$\ell_0$-\textsf{KSVM} & 79.13\% & {\bf 78.25}\%  &{\bf 98}  & 1.656 &191 \\
    ($m\!=\!768,d\!=\!8$)&$\ell_1$-\textsf{KSVM}& 82.39\%  & 75\%  & 275 &  0.197 & 100\\
    & $\ell_2$-\textsf{KSVM}& 81.52\% & 77.27\%  & 287 & 0.008  & 2\\ \hline

     Heart  &$\ell_0$-\textsf{KSVM} & 88.88\% & {\bf 85.19}\%  &{\bf 76}  & 0.066 & 58\\
    ($m\!=\!270,d\!=\!13$)&$\ell_1$-\textsf{KSVM}& 91.98\%  & 82.41\%  & 105 &  0.016 & 34\\
    & $\ell_2$-\textsf{KSVM}& 91.98\% & 84.26\%   & 97 & 0.003  & 2\\ \hline

     Fourclass  &$\ell_0$-\textsf{KSVM} & 100\% & {\bf 100}\%  &{\bf 85}  & 4.292 & 301\\
    ($m\!=\!862,d\!=\!2$)&$\ell_1$-\textsf{KSVM}& 99.42\%  & 99.71\%  & 130 &  0.375 & 100\\
    & $\ell_2$-\textsf{KSVM}& 98.07\% & 99.13\%   & 151 & 0.008  & 2\\ \hline

    German.numer  &$\ell_0$-\textsf{KSVM} & 86.33\% & {\bf 77.25}\%  &{\bf 284}  &10.418 & 626\\
    ($m\!=\!1000,d\!=\!24$)&$\ell_1$-\textsf{KSVM}& 94\%  & 77\%  & 379 &  0.458 & 100\\
    & $\ell_2$-\textsf{KSVM}& 84.5\% & {\bf 77.25}\%   & 418 & 0.025 & 2\\ \hline

     svmguide1  &$\ell_0$-\textsf{KSVM} & 94.82\% &  94.58\%  &{\bf 171}  &22.48 &118 \\
    ($m\!=\!3089,d\!=\!4$)&$\ell_1$-\textsf{KSVM}& 97.36\%  & 96.85\%  & 209 &  0.966 & 100\\
    & $\ell_2$-\textsf{KSVM}& 97.25\% & {\bf 97.17}\%   & 293 & 0.031 & 2\\ \hline
\end{tabular}
\label{Tab3}
\end{table}

   \section{Conclusion}

   In this paper, we develop the theoretical and algorithmic analysis for the kernel SVM with $\ell_0$-norm hinge loss, namely \textsf{$\ell_0$-KSVM}. Theoretically, we have proved the equivalent relationship among the proximal stationary point, Karush-Kuhn-Tucker point, and the local minimizer of $\ell_0$-KSVM by the limiting subdifferential of the $\ell_0$-norm hinge loss. Algorithmically, we have proposed the ADMM algorithm for $\ell_0$-KSVM and presented its convergence analysis. Finally, the numerical experiments on the synthetic and real datasets have justified our theoretical and algorithmic analysis.

  \appendices

 \section{Proof of Lemma \ref{subdiff-Cgz}} \label{AppendixA}
  \begin{proof} {\it (i)}. Fix any $\delta>0$. Then $L_{0/1}(g(\bz))\!\geq \!L_{0/1}(g(\overline{\bz}))$ for all
   $\bz\in\! U_{\delta}(\overline{\bz})$ by the definition of $L_{0/1}$
   and the continuity of $g$.
   We consider the following two cases.

   \medskip
   \noindent
   Case 1: $L_{0/1}(g(\bz))\!>\!L_{0/1}(g(\overline{\bz}))$
   for all $\bz\in\!U_{\delta}(\overline{\bz})\backslash\{\overline{\bz}\}$.
   Now $\Theta_{\overline{I}}\backslash\{\overline{\bz}\}=\emptyset$.
   By the definition of regular normal,
   $\widehat{\mathcal{N}}_{\Omega_{\overline{I}}}(\overline{\bz})=\mathbb{R}^l$.
   In addition, for any $\bv\in\mathbb{R}^l$,
   $$
   \begin{array}{ll}
    &\displaystyle{\liminf_{\substack{\bz\to\overline{\bz}\\ \bz\ne\overline{\bz}}}
    \frac{\varphi(\bz)-\varphi(\overline{\bz})-\langle \bv,\bz-\overline{\bz}\rangle}{\|\bz-\overline{\bz}\|} }\\
    &\displaystyle{=\liminf_{\substack{\bz\to\overline{\bz}\\ \bz\ne\overline{\bz}}}
    \frac{C L_{0/1}(g(\bz))-C L_{0/1}(g(\overline{\bz}))-\langle \bv,\bz-\overline{\bz}\rangle}
    {\|\bz-\overline{\bz}\|}\ge 0, }
    \end{array}
   $$
   which means that $\bv\in\widehat{\partial}\varphi(\overline{\bz})$.
   By the arbitrariness of $\bv$ in $\mathbb{R}^l$, we have that
   $\widehat{\partial}\varphi(\overline{\bz})=\mathbb{R}^l$.
   Consequently, $\widehat{\partial}\varphi(\overline{\bz})
           =\widehat{\mathcal{N}}_{\Theta_{\overline{I}}}(\overline{\bz})=\mathbb{R}^l$
                and  then $\partial\varphi(\overline{\bz})=\mathcal{N}_{\Theta_{\overline{I}}}(\overline{\bz})=\mathbb{R}^l$.
  So, the two equalities in Statement (i) hold for this case.

  \medskip
  \noindent
   Case 2: there exists
   $\bz_{\delta}\!\in\! U_{\delta}(\overline{\bz})\backslash\{\overline{\bz}\}$
   with  $L_{0/1}(g(\bz))\!=\!L_{0/1}(g(\overline{\bz}))$.
   Fix any $\bv\in\mathbb{R}^l$. Then, it holds that
   \begin{align*}
    &\liminf_{\substack{\bz\to\overline{\bz}\\ \bz\ne\overline{\bz}}}
    \frac{\varphi(\bz)-\varphi(\overline{\bz})-\langle \bv,\bz-\overline{\bz}\rangle}{\|\bz-\overline{\bz}\|} \\
    &=\liminf_{\substack{\bz\to\overline{\bz}\\ \bz\neq \overline{\bz}}}
    \frac{C L_{0/1}(g(\bz))-C L_{0/1}(g(\overline{\bz}))-\langle \bv,\bz-\overline{\bz}\rangle}
    {\|\bz-\overline{\bz}\|}\\
    &=\liminf_{\substack{\bz\to\overline{\bz},\bz \neq \overline{\bz}\\{\rm ps}(g(\bz))=I}}
    \frac{-\langle \bv,\bz-\overline{\bz}\rangle}{\|\bz-\overline{\bz}\|}
    =\liminf_{\substack{\bz\ne\overline{\bz}\\ \bz\xrightarrow[\Theta_{\overline{I}}]{}\overline{\bz}}}
    \frac{-\langle \bv,\bz-\overline{\bz}\rangle}{\|\bz-\overline{\bz}\|}.
   \end{align*}
   By the definition of the regular subgradient, we have
    $\widehat{\partial}\varphi(\overline{\bz})
    =\widehat{\mathcal{N}}_{\Theta_{\overline{I}}}(\overline{\bz})$.
   Together with Case 1, we obtain the first  equality in Statement (i).

   Next we show that $\partial\varphi(\overline{\bz})=\mathcal{N}_{\Theta_{\overline{I}}}(\overline{\bz})$.
   Pick any $\bv\in\partial\varphi(\overline{\bz})$. There exist sequences
   $\bz^k\xrightarrow[\varphi]{}\overline{\bz}$ and $\bv^k\in\widehat{\partial}\varphi(\bz^{k})$
   with $\bv^k\to \bv$. Since $\bz^k\xrightarrow[\varphi]{}\overline{\bz}$, for all $k$ large enough,
   we have $L_{0/1}(g(\bz^k))=L_{0/1}(g(\overline{\bz}))$
   and then ${\rm ps}(g(\bz^k))=I$, which means that $\bz^k\in\Theta_{\overline{I}}$.
   From the first equality, $\widehat{\partial}\varphi(\bz^{k})
   =\widehat{\mathcal{N}}_{\Theta_{\overline{I}}}(\bz^k)$
   for all $k$ large enough. So, $\bv\in\mathcal{N}_{\Theta_{\overline{I}}}(\overline{\bz})$
   and $\partial\varphi(\overline{\bz})\subseteq\mathcal{N}_{\Theta_{\overline{I}}}(\overline{\bz})$.
   Conversely, pick any $\bv\in\mathcal{N}_{\Theta_{\overline{I}}}(\overline{\bz})$.
   There exist $\bz^k\xrightarrow[\Theta_{\overline{I}}]{}\overline{\bz}$
   and $\bv^k\in\widehat{\mathcal{N}}_{\Theta_{\overline{I}}}(\bz^k)$ with $\bv^k\to \bv$.
   Clearly, $\bz^k\xrightarrow[\varphi]{}\overline{\bz}$. From
   $\widehat{\partial}\varphi(\bz^{k})=\widehat{\mathcal{N}}_{\Theta_{\overline{I}}}(\bz^k)$,
   we have $\bv\in\partial\varphi(\overline{\bz})$ and
   $\mathcal{N}_{\Theta_{\overline{I}}}(\overline{\bz})\subseteq\partial\varphi(\overline{\bz})$.
   Thus, $\partial\varphi(\overline{\bz})=\mathcal{N}_{\Theta_{\overline{I}}}(\overline{\bz})$.

  \medskip
  \noindent
  {\it (ii)}. Since $\mathcal{G}$ is metrically subregular at $\overline{\bz}$ for the origin,
   by using \cite[Page 211]{ioffe2008metric} and the definition of $\Theta_{\overline{I}}$,
   we have $\mathcal{N}_{\Theta_{\overline{I}}}(\overline{\bz})
  \subseteq \nabla\!g_{\overline{I}}(\overline{x})\mathcal{N}_{\mathbb{R}_{-}^{|\overline{I}|}}$.
   Note that
   \[
   \mathcal{N}_{\Theta_{\overline{I}}}(\overline{\bz})
   \supseteq\widehat{\mathcal{N}}_{\Omega_{\overline{I}}}(\overline{x})
   \supseteq\nabla\!g_{\overline{I}}(\overline{x})\mathcal{N}_{\mathbb{R}_{-}^{|\overline{I}|}}
   \]
   by \cite[Theorem 6.14]{1998Variational}. Therefore, the desired result (3.2) can be obtained.
  \end{proof}

   \section{Proof of Theorem \ref{OptiTh1}} \label{AppendixB}

   \begin{proof}
    Denote the feasible set of the problem (2.4) by
    $\Omega$, i.e.,
    \[
    \Omega:=\big\{(\bc,b,\bu):\ \bu+{\rm diag}(\by)\bK \bc+b\by ={\bf 1}\big\}.
    \]

   We first prove the necessity.
   Let $\bv^*\!:=\!(\bc^*,b^*,\bu^*)$ be a proximal stationary point of (2.4), then there exists $\boldsymbol{\lambda}^*\in \bR^{m}$ such that $(\bv^*,\boldsymbol{\lambda}^*)$ satisfies
   equations (3.6a)-(3.6d) and (3.5) with
   ${\boldsymbol\eta}^*=\bu^*-\gamma \boldsymbol{\lambda}^*$.
   We will argue that $\bv^*$ is a locally optimal to (2.4), i.e., there exists $\delta>0$ such that for any $\bv\in \Omega\cap U_{\delta}(\bv^*)$
   \begin{align}\label{OptiTh1Temp1}
   \frac{1}{2}(\bc^*)^{\top}\bK\bc^*+C\|\bu^*_+\|_0
   \leq \frac{1}{2}\bc^{\top}\bK\bc+C\|\bu_+\|_0.
   \end{align}
   As $\|\bu_+\|_0$ is lower semi-continuous at $\bu^*$, by \cite[Proposition 4.3]{Mordukhovich2013},
   there exists $\delta_1>0$ such that
   \begin{align}\label{OptiTh1Temp2}
       \|\bu^*_+\|_0\leq \|\bu_+\|_0+1/2,\mbox{ for any } \bv\in \Omega\cap U_{\delta_1}(\bv^*).
   \end{align}
   Notice that $\|\bu_+\|_0$ can only take values from the set
   $\{0,1,\dots,m\}$. Together with \eqref{OptiTh1Temp2}, we can know that
   \begin{align}\label{OptiTh1Temp3}
       \|\bu^*_+\|_0\leq \|\bu_+\|_0,\mbox{ for any } \bv\in \Omega\cap U_{\delta_1}(\bv^*).
   \end{align}
   Define two index sets
   \begin{align*}
   \Gamma_*&=\{i\in \bN_m: u^*_i=0\} {\quad\rm and\quad}
   \overline{\Gamma}_*=\bN_m\!\setminus\! \Gamma_*.
   \end{align*}
   Together with (3.6d) and (3.5), we know that
   \begin{align}\label{GammaDefeq1}
       u^*_i=0,\;\; -\sqrt{2 C/\gamma}\leq \lambda^*_i< 0,\mbox{ for any } i\in \Gamma_*
   \end{align}
   and
   \begin{align}\label{GammaDefeq2}
      u^*_i\neq 0,\quad \lambda^*_i=0,\mbox{ for any } i\in \overline{\Gamma}_*.
   \end{align}
   Denote
   \[
   \Omega_1:=\{\bv\in \Omega: u_i\leq 0, i\in \Gamma_*\}.
   \]
   By \eqref{OptiTh1Temp3} and definition of $\|\bu_+\|_0$, we can conclude that
   \begin{align}\label{OptiTh1Temp4}
       \|\bu^*_+\|_0+1\leq \|\bu_+\|_0,\mbox{ for any } \bv\in (\Omega\!\setminus\!\Omega_1)\!\cap\! U_{\delta_1}(\bv^*).
   \end{align}
   On the other hand,
    since that $\bc^{\top}\bK\bc$ is locally Lipschitz continuous in $\bR^{m}$, there exists $\delta_2>0$ such that
   \begin{align}\label{OptiTh1Cineq}
   | \bc^{\top}\bK\bc\!-\!(\bc^*)^{\top}\bK\bc^*|\leq 2C,\mbox{ for any }  \bv\!\in\! \Omega\!\cap\! U_{\delta_2}(\bv^*).
   \end{align}
   Set $\delta:=\min\{\delta_1,\delta_2\}$.
 We split the proof of the \eqref{OptiTh1Temp1} into the following two cases:

   \medskip
   Case (i) $\bv\in  \Omega_1\cap U_{\delta}(\bv^*)$.
   Then
   \begin{align}\label{OptiTh1Temp5}
   u_i\le 0, \mbox{ for any } i\in \Gamma_*.
   \end{align}
   Since $\bv,\bv^*\in \Omega$, by (3.6b), we have
   \begin{align}\label{OptiTh1Temp6}
       -{\rm diag}(\by)\bK(\bc-\bc^*)=(\bu-\bu^*)+(b-b^*)\by.
   \end{align}
   Notice that
   \begin{align}
   \begin{array}{rll}
   &&\frac{1}{2}\big(\bc^{\top}\bK\bc-(\bc^*)^{\top}\bK\bc^*\big)\nonumber\\
   &=&\frac{1}{2}(\bc-\bc^*)^{\top}\bK(\bc-\bc^*)
   +{(\bc^*)}^{\top}\bK(\bc-\bc^*)\nonumber\\
   &\geq & {\bc^*}^{\top}\bK(\bc-\bc^*)\nonumber\\
   & \overset{(3.6a)}{=} & -\langle \bK{\rm diag}(\by)\boldsymbol{\lambda}^*, \bc-\bc^*\rangle\nonumber\\
   &=& -\langle \boldsymbol{\lambda}^*, {\rm diag}(\by)\bK(\bc-\bc^*)\rangle\nonumber\\
   &\overset{\eqref{OptiTh1Temp6}}{=}& \langle \boldsymbol{\lambda}^*,(\bu-\bu^*)+(b-b^*)\by\rangle\nonumber\\
   &\overset{(3.6b)}{=}& \langle \boldsymbol{\lambda}^*,\bu-\bu^*\rangle \nonumber\\
   &\overset{\eqref{GammaDefeq1},\eqref{GammaDefeq2}}{=}&\sum\limits_{i\in \Gamma_*}(\lambda^*_i u_i),\nonumber\\
   &\overset{\eqref{GammaDefeq1},\eqref{OptiTh1Temp5}}{\geq }& 0.
   \end{array}
   \end{align}
  Together the above inequality with \eqref{OptiTh1Temp3}, we can obtain that for any $\bv\in  \Omega_1\cap U_{\delta}(\bv^*)$,
  \begin{align}\label{OptiTh1Temp7}
  \frac{1}{2}(\bc^*)^{\top}\bK\bc^*+C\|\bu^*_+\|_0
   \leq \frac{1}{2}\bc^{\top}\bK\bc+C\|\bu_+\|_0.
  \end{align}

  \medskip

   Case (ii) $\bv\in (\Omega\!\setminus\! \Omega_1)\cap U_{\delta}(\bv^*)$. Then
  \begin{align}\label{OptiTh1Temp8}
   \begin{array}{rcl}
    \frac{1}{2}\bc^{\top}\bK\bc\!+\!C\|\bu_+\|_0
    & \overset{\eqref{OptiTh1Cineq}}{\geq } &
    \frac{1}{2}(\bc^*)^{\top}\bK\bc^*\!-\!C\!+\! C\|\bu_+\|_0 \\
    &\overset{\eqref{OptiTh1Temp4}}{\geq } &
    \frac{1}{2}(\bc^*)^{\top}\bK\bc^*+\|\bu^*_+\|_0.
    \end{array}
    \end{align}
   Hence, the inequality \eqref{OptiTh1Temp1} holds from  \eqref{OptiTh1Temp7} and  \eqref{OptiTh1Temp8}. The proof of sufficiency is complete.

   \medskip

   To prove the sufficiency, we write $\bz:=(\bc,b)$ and
     \[
     g(\bz):={\bf 1}-{\rm diag}(\by)\bK \bc-b\by  \mbox{ and } \varphi(\bz):=L_{0/1}(g(\bz)).
     \]
     Suppose that $(\bc^*,b^*,\bu^*)$ is a locally optimal solution to (2.4). Hence, we have
     \begin{align*}
     \bu^*+{\rm diag}(\by)\bK \bc^*+b^*\by ={\bf 1}
     \end{align*}
     and $\bz^*:=(\bc^*,b^*)$ is a locally optimal solution to (2.3).
     Then by the optimal condition \cite[Theorem 10.1]{1998Variational} and Theorem 3.8, there exists $\boldsymbol{\lambda}^*\in\bR^{m}$ such that
     \begin{align}\label{KKTTemp}
         \begin{array}{c}
             \begin{pmatrix}
                        \bK \bc^*\\
                        {\bf 0 }
                       \end{pmatrix}+\begin{pmatrix}\bK{\rm diag}(\by)\\\by^{\top}\end{pmatrix}\boldsymbol{\lambda}^*={\bf 0}
         \end{array}
     \end{align}
  and
   \begin{align}\label{LambdaTemp}
       \lambda_i^*\in \left\{\begin{array}{cl}
        \bR_- ,   &  {\rm if\;} i\in \overline{I}_0,\\
        0 ,   &  {\rm if\;} i\in I\cup\overline{I}_{<},
       \end{array}\right.
   \end{align}
   with $I\!:=\!\{i\!\in\! \bN_m : g(\bz^*)>0\}$, $\overline{I}_0\!:=\!\{i\!\in\! \bN_m : g(\bz^*)=0\}$ and $\overline{I}_{<}\!:=\!\{i\!\in\! \bN_m : g(\bz^*)<0\}$.
   The equation \eqref{KKTTemp} implies  (3.6a) and (3.6b) both hold.
  To argue  $(\bc^*,b^*,\bu^*)$ being a proximal stationary point, it suffices to prove that ${\rm Prox}_{\gamma C L_{0/1}}(\bu^*-\gamma\boldsymbol{\lambda}^*)  =\bu^*$ for some $\gamma>0$.
  Write $\overline{I}_{0-}:=\{i\in \overline{I}_0 : \lambda^*_i<0\}$.
  Take
  \begin{equation}\label{gamma}
   \gamma\;\left\{\begin{array}{ll}
  \! =\!\min\big\{\frac{2C}{\max_{i\in \overline{I}_0}(\lambda^*_i)^2}, \frac{\min_{i\in I}(u^*_i)^2}{2C}\big\},&
   \mbox{ if }\overline{I}_{0-}\!\ne\!\emptyset, I\!\ne\!\emptyset,\\                                                                           \!=\!\frac{\min_{i\in I}(u^*_i)^2}{2C}, &\mbox{ if }\overline{I}_{0-}\!=\!\emptyset, I\!\ne\!\emptyset, \\
   \!=\!\frac{2C}{\max_{i\in \overline{I}_0}(\lambda^*_i)^2}, &\mbox{ if }\overline{I}_{0-}\!\ne\!\emptyset,I\!=\!\emptyset, \\
   \in (0,+\infty),& \mbox{ otherwise}. \\
   \end{array}\right.
   \end{equation}
  Notice that $\boldsymbol{\eta}^*:=\bu^*-\gamma\boldsymbol{\lambda}^*$ and $\bu^*=g(\bz^*)$. Then from \eqref{LambdaTemp}, it follows that for any $i\in \bN_m$
  \begin{align*}
        (\boldsymbol{\eta}^*)_i:=\left\{\begin{array}{cl}
                         -\gamma\lambda^*_i,  & {\rm if\;} i\in \overline{I}_0,\\
                         u^*_i,  & {\rm if\;} i\in \overline{I}_{<}\cup I.\\
                          \end{array}
                          \right.
  \end{align*}
  Together this with \eqref{gamma} and (3.5), we know
  \[
  {\rm Prox}_{\gamma C L_{0/1}}(\bu^*-\gamma\boldsymbol{\lambda}^*)  =\bu^*.
  \]
  The proof is hence complete.
    \end{proof}

\ifCLASSOPTIONcompsoc
  \section*{Acknowledgments}
\else
  \section*{Acknowledgment}
\fi

   The first author is supported in part by the Natural Science Foundation of China under grants 11901595 and 11971490, the Natural Science Foundation of Guangdong Province under grants 2021A1515012345, by Science and Technology Program of Guangzhou (202201010677), and by the Opening Project of Guangdong Province Key Laboratory of Computational Science at the Sun Yat-sen University under grant 2021018. The Third author is supported by Guangdong Basic and Applied Basic Research Foundation (2020A1515010408).

\ifCLASSOPTIONcaptionsoff
  \newpage
\fi



%
%
%

\bibliographystyle{ieeetr}
\bibliography{KSVM}
%

\begin{IEEEbiography}[{\includegraphics[width=1in,height=1.25in,clip,keepaspectratio]{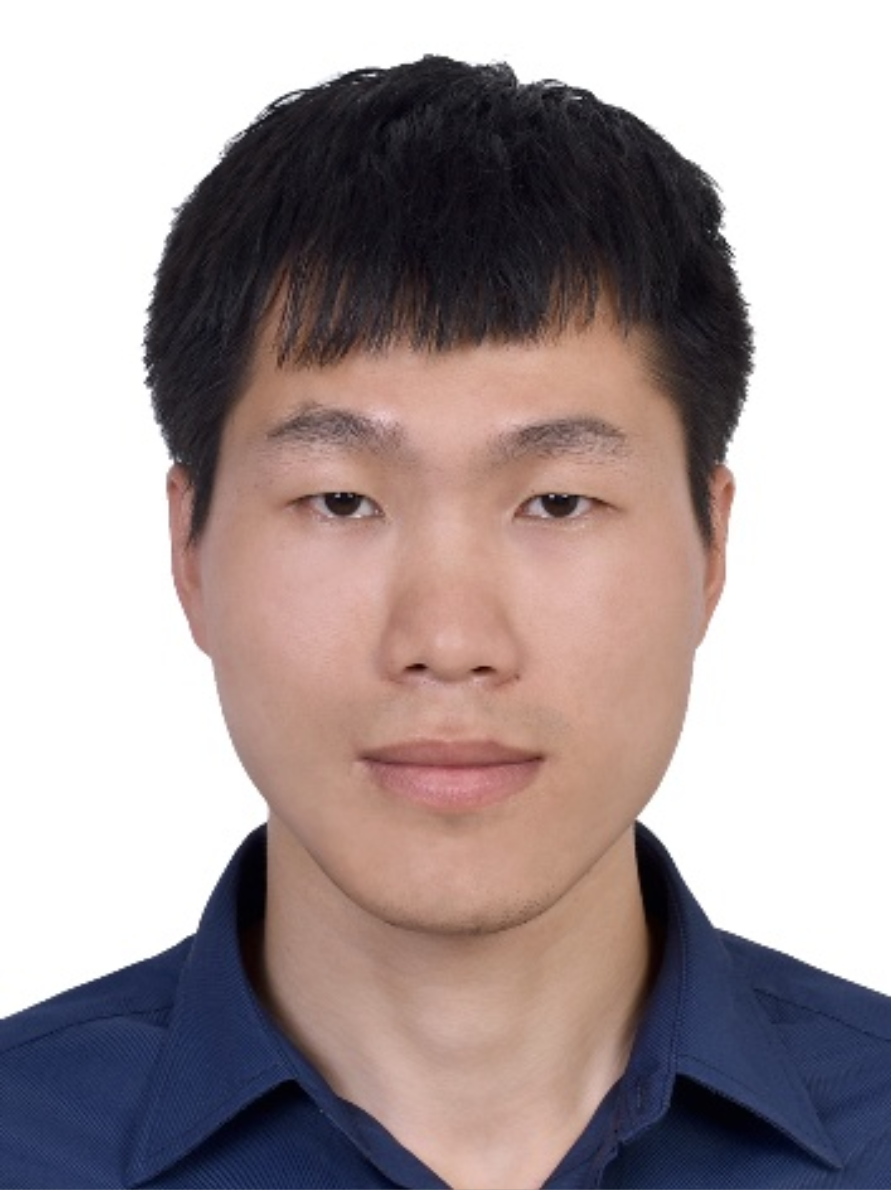}}]{Rongrong Lin}
received the B.Sc. degree and Ph.D. degree in mathematics from Sun Yat-sen University, China,  in 2014 and 2017, respectively. He was a research assistant of the University of Alberta, Canada, from 2015 to 2016.
From 2017 to 2020, he was an Associate Research Fellow with Sun Yat-sen University, China. He was a Visiting Scholar with Old Dominion University, USA in 2018.
Currently, he is a Lecturer at School of Mathematics and Statistics, Guangdong University of Technology, China.  His research interests include machine learning, learning theory, and approximation theory.
\end{IEEEbiography}

\begin{IEEEbiography}[{\includegraphics[width=1in,height=1.25in,clip,keepaspectratio]{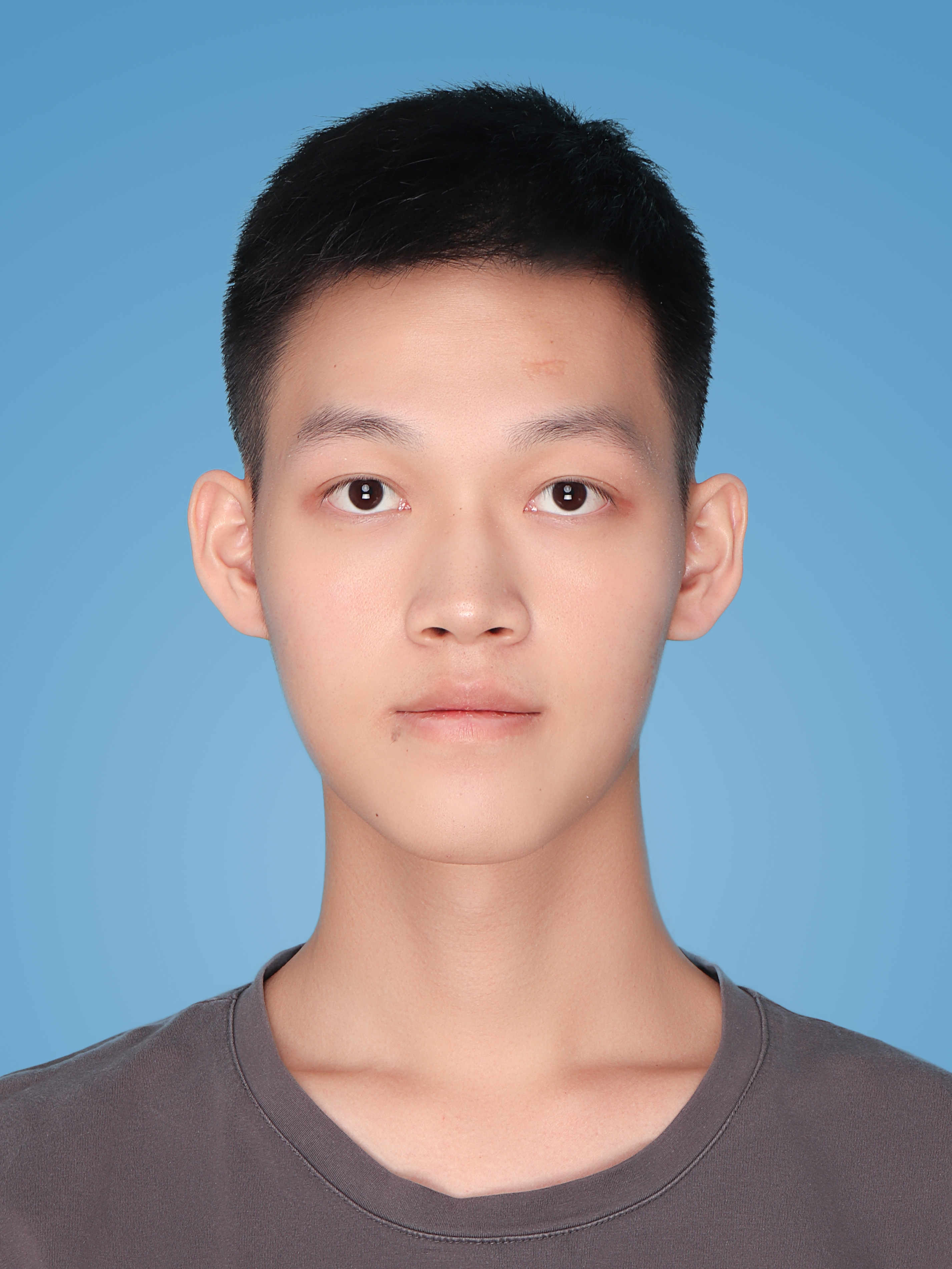}}]{Yingjia Yao}  received the B.Sc. degree from Guangdong University of Finance, China, in 2021.
He is currently a master candidate of School of Mathematics and Statistics, Guangdong University of Technology, China.
His research interests include machine learning and optimization theory.
\end{IEEEbiography}

\begin{IEEEbiography}[{\includegraphics[width=1in,height=1.25in,clip,keepaspectratio]{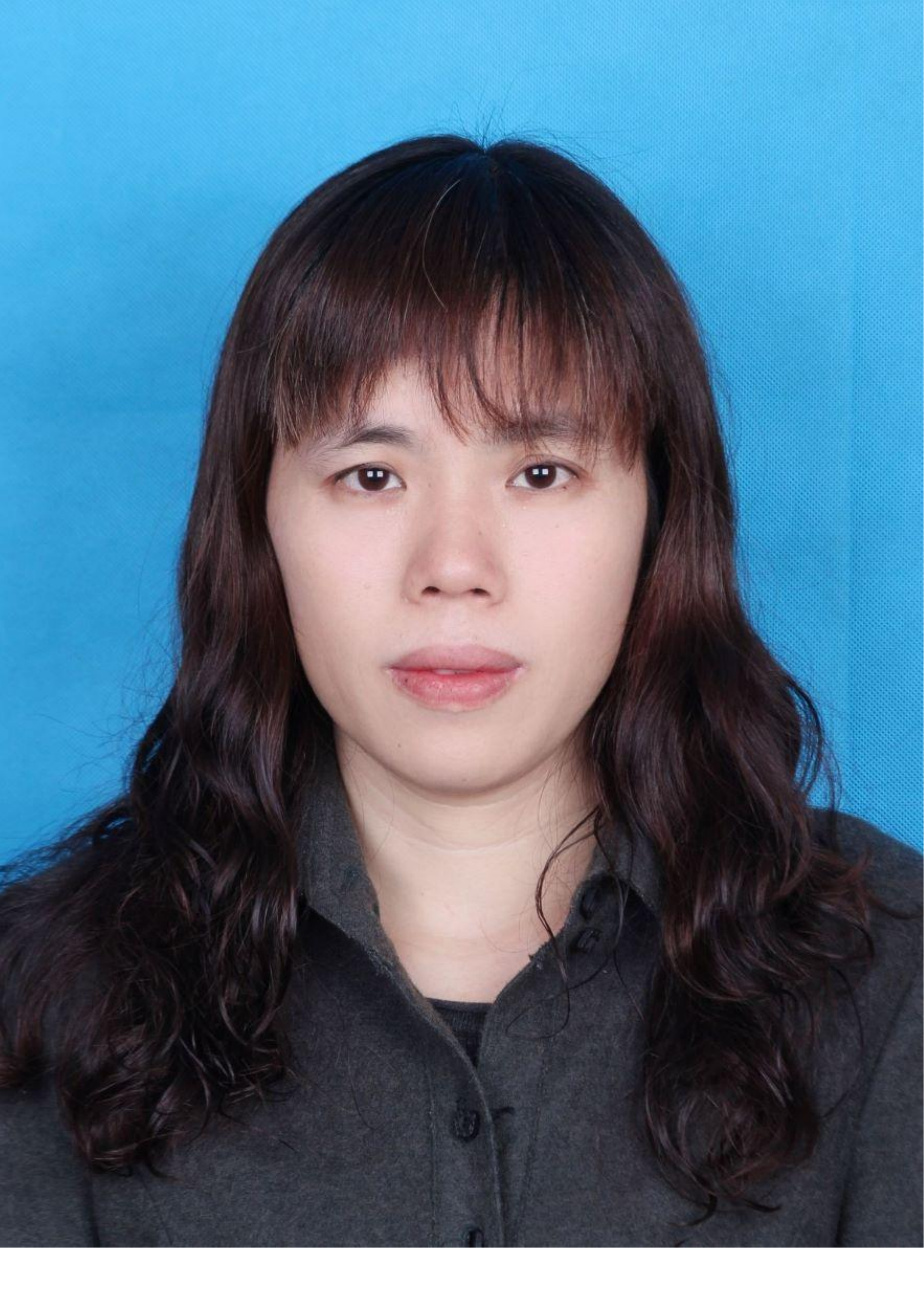}}]{Yulan Liu}
received her M.Sc. degree in Faculty of Mathematics from Nanchang University, Nanchang, China, in 2002, and received Ph.D. degree in School of Mathematics, South China University of Technology, Guangzhou, China, in 2018. Currently, she is an Associate Professor in School of Mathematics and Statistics, Guangdong University of Technology. Her research interests include machine learning and matrix optimization and application.

\end{IEEEbiography}

\end{document}